\documentclass[accepted]{uai2023} 


\usepackage[american]{babel}

\usepackage{natbib} 
    \bibliographystyle{plainnat}
    
\usepackage{mathtools} 
\usepackage{booktabs} 
\usepackage{tikz} 




\title{On the Limitations of Markovian Rewards\\ to Express Multi-Objective, Risk-Sensitive, and Modal Tasks}

%
%
\author[1,2]{\href{mailto:<joar.skalse@cs.ox.ac.uk>?Subject=Your UAI 2023 paper}{Joar~Skalse}{}}
\author[1]{Alessandro~Abate}
\affil[1]{%
    Computer Science Department\\
    Oxford University\\
    Oxford, UK
}
\affil[2]{%
    The Future of Humanity Institute\\
    Oxford, UK\\
}

\usepackage{amsthm, amsfonts, bbm, csquotes, amssymb} 

\newtheorem{proposition}{Proposition}
\newtheorem{theorem}{Theorem}
\newtheorem{lemma}{Lemma}
\newtheorem{corollary}{Corollary}

\newtheorem{definition}{Definition}

\usepackage{xcolor}

\newcommand{\M}{\mathcal{M}}

\newcommand{\States}{\mathcal{S}}
\newcommand{\Actions}{\mathcal{A}}
\newcommand{\mcS}{\mathcal{S}}
\newcommand{\mcA}{\mathcal{A}}
\newcommand{\init}{\mu_0}
\newcommand{\R}{R}
\newcommand{\y}{\gamma}
\newcommand{\Rs}{\textbf{R}}

\newcommand{\Ob}{{\mathcal{O}}}

\newcommand{\m}{m_{\tau,\init,\gamma}}

\newcommand{\SxA}{\mcS \times \mcA}
\newcommand{\SxAxS}{\mcS \times \mcA \times \mcS}

\newcommand{\MDP}{\langle \mcS, \mcA, \tau, \init, \R, \y \rangle}
\newcommand{\MOMDP}{\langle \mcS, \mcA, \tau, \init, \Rs, \y \rangle}
\newcommand{\MDPwO}{\langle \mcS, \mcA, \tau, \init, \tilde{\R}, \y \rangle}


\begin{document}

\maketitle

\begin{abstract}
In this paper, we study the expressivity of scalar, Markovian reward functions in Reinforcement Learning (RL), and identify several limitations to what they can express.
Specifically, we look at three classes of RL tasks; multi-objective RL, risk-sensitive RL, and modal RL.
For each class, we derive necessary and sufficient conditions that describe when a problem in this class can be expressed using a scalar, Markovian reward. Moreover, we find that scalar, Markovian rewards are unable to express most of the instances in each of these three classes. We thereby contribute to a more complete understanding of what standard reward functions can and cannot express.
In addition to this, we also call attention to modal problems as a new class of problems, since they have so far not been given any systematic treatment in the RL literature. We also briefly outline some approaches for solving some of the problems we discuss, by means of bespoke RL algorithms.
\end{abstract}

\section{Introduction}





To solve a task using reinforcement learning (RL), we must first encode that tasks as a reward function \citep{sutton2018reinforcement}.
Typically, these rewards are \emph{scalar} and \emph{Markovian}. 
However, it is often not straightforward to determine if a given task \emph{can} be adequately expressed using such a reward function. Therefore, understanding the expressivity of scalar, Markovian rewards is a basic and foundational question of the RL setting.
In this paper, we identify and characterise several specific limitations in the expressivity of scalar, Markovian rewards.
Specifically, we examine three broad classes of tasks, all of which are both intuitive to understand, and useful in many practical situations. 
We then derive necessary and sufficient conditions that describe when these tasks can be expressed using ordinary reward functions, and consequently
show that \emph{almost no} tasks in any of these three classes can be expressed using scalar, Markovian rewards.
This suggests that scalar, Markovian reward functions are semantically limited in certain important ways. 
We thus contribute to a more complete understanding of what standard reward functions can and cannot express. This clarifies the implicit assumptions behind many common RL techniques, and makes it easier to determine if they are applicable to a given practical problem.

The first class of problems we look at, in Section~\ref{section:morl}, are single-policy, multi-objective RL tasks (MORL).
In such problems, the agent receives multiple reward signals, and the aim is to learn a single policy that achieves an optimal trade-off amongst those rewards, according to some specified criterion \citep{Roijers2013,Liu2015}. 
For example, a single-policy MORL algorithm might attempt to maximise the rewards lexicographically \citep{lmorl}.
We will provide necessary and sufficient conditions describing when a MORL problem can be reduced to scalar-reward RL, by providing a single reward function that induces the same preferences as the MORL problem.
We find that this can \emph{only} be done for MORL problems that correspond to a linear weighting of the rewards, which means that it cannot be done for the vast majority of all interesting MORL problems. 
This result is analogous to Harsanyi's Utilitarian Theorem \cite{harsanyi1955cardinal}, generalised to the RL setting.

The next class of problems we study, in Section~\ref{section:risk_sensitive_rl}, is risks-sensitive RL. 
In expected utility theory, risk-aversion is often modelled using utility functions that are concave in some of their variables. 
We will show that these tasks cannot be expressed as Markovian reward functions,
by demonstrating that no non-affine monotonic transformations of the trajectory return function are possible. 
This demonstrates another limitation in the expressive power of Markovian rewards. 

In Section~\ref{section:modal}, we introduce a new class of tasks, which we call \emph{modal} tasks. These are tasks where the agent is evaluated not only based on what distribution of trajectories it generates, but also based on what it \emph{could have done} along those trajectories. 
As an example, consider the instruction \enquote{you should always be \emph{able} to return to the start state}. 
We provide a formalisation of such tasks, argue that there are many situations in which these tasks could be useful, and finally prove that these tasks also typically cannot be formalised using scalar, Markovian reward functions. 

In Section~\ref{section:solving_inexpressible}, we discuss how to solve tasks from each of these classes using specialised RL solutions: we provide references to existing literature, and also sketch both an approach for learning a wide class of MORL problems, and an approach for learning a wide class of modal problems. Finally, in Section~\ref{section:discussion}, we discuss the implications of our results, together with several pieces of related work.

\section{Preliminaries}\label{section:preliminaries}

The standard RL setting is formalised using \textit{Markov Decision Processes} (MDPs) \citet{sutton2018reinforcement}, which are tuples $\MDP$ where $\mcS$ is a set of states, $\mcA$ is a set of actions, $\tau : \mcS \times \mcA \to \Delta(\States)$ is a transition function, $\init$ is an initial state distribution over $\mcS$, $R : \mcS \times \mcA \to \mathbb{R}$ is a reward function, and $\gamma \in (0,1)$ is a discount factor.
A \emph{trajectory} $\xi$ is in general an element of $(\SxA)^\omega$, i.e.\ a sequence $s_0, a_0, s_1 \dots$. 
We use $G$ to denote the \emph{trajectory return function}, where $G(\xi) = \sum_{t=0}^\infty \gamma^t R(s_t,a_t)$.
A \emph{policy} is a mapping $\pi : \mcS \to \Delta(\mcA)$, and $\Pi$ is the set of all policies. 
Given a policy $\pi$, its \emph{value function} $V^\pi : \mcS \to \mathbb{R}$ is the function where $V^\pi(s)$ is the expected future discounted reward when following $\pi$ from $s$, and its \emph{$Q$-function} $Q^\pi$ is $\mathbb{E}_{s' \sim \tau(s,a)}[R(s,a) + \gamma \cdot V^\pi(s')]$. 
The \emph{policy evaluation function} $J : \Pi \to \mathbb{R}$ is $J(\pi) = \mathbb{E}_{s_0 \sim \init}[V^\pi(s_o)]$. If a policy maximises $J$, then we say that this policy is \emph{optimal}. We denote optimal policies by $\pi^\star$, and their value function and $Q$-function by $V^\star$ and $Q^\star$. Moreover, given an MDP $\M$, we say that $\M$'s policy order is the ordering $\prec$ on $\Pi$ where $\pi_1 \prec \pi_2 \iff J(\pi_1) < J(\pi_2)$ for any $\pi_1, \pi_2$. 

In this paper, we will say that a reward function $R$ is \emph{trivial} if $J(\pi_1) = J(\pi_2)$ for all $\pi_1,\pi_2$. Moreover, we say that $R_1$ and $R_2$ are \emph{equivalent} if $J_1(\pi_1) < J_1(\pi_2) \iff J_2(\pi_1) < J_2(\pi_2)$ for all $\pi_1,\pi_2$, and that they are \emph{opposites} if $J_1(\pi_1) < J_1(\pi_2) \iff J_2(\pi_1) > J_2(\pi_2)$ for all $\pi_1,\pi_2$.

MORL problems are formalised using \emph{Multi-Objective MDPs} (MOMDPs), which are tuples $\MOMDP$, 
with the only difference from MDPs being $\Rs$, 
which is now a function $\Rs : \mcS \times \mcA \to \mathbb{R}^k$ that, for each pair $(s,a)$, 
returns $k$ different rewards (for some finite $k$). We denote the $i$'th component of $\Rs$ as the scalar reward function $R_i$, 
and use $V^\pi_i$, $Q^\pi_i$, $J_i$, and $G_i$, etc, to refer to its value-, $Q$-, evaluation-, and return function, etc. 
There are two types of MORL problems; single-policy MORL, where the goal is to compute one policy that achieves an optimal trade-off of the rewards, and multi-policy MORL, where the aim is to compute several policies (typically with the aim of approximating the Pareto front of the rewards).
In this paper, we are concerned with single-policy MORL.
Since there may not be a single policy that maximises each component of $\Rs$, a single-policy MORL problem needs some additional rule for combining and trading off each reward. 

In economics and psychology, risk-aversion is often modelled using utility functions $U(c)$ that are concave in some relevant variable $c$.
The most common risk-averse utility functions are the \emph{exponential}, the \emph{isoelastic}, and the \emph{quadratic} utility functions. 
The exponential utility function is given by $U(c) = 1-e^{\alpha c}$, where $\alpha > 0$ is a parameter controlling the degree of risk aversion. The isoelastic utility function is given by $U(c) = (c^{1-\alpha}-1)/(1-\alpha)$, for $\alpha > 0, \alpha \neq 1$, or by $U(c) = \ln(c)$ (corresponding to the case when $\alpha = 1$). The quadratic utility function is given by $U(c) = c - \alpha c^2$, where $\alpha > 0$.
Since this function is decreasing for sufficiently large $c$, its domain is typically restricted to $(-\infty, 1/2\alpha]$.

\paragraph*{A Remark on \enquote{Tasks}:}

In this paper, we are investigating the question of when a given task can be expressed using a scalar, Markovian reward function.
To do this, we must first formalise what it should mean for a reward function to \enquote{express a task}. 
One option is to say that a task corresponds to a desired policy $\pi$, and that a reward function $R$ expresses the task if $\pi$ is optimal under $R$ (possibly with the additional requirement that $\pi$ is the \emph{only} policy that is optimal under $R$). With this definition, we find that \emph{any} task can be expressed as a Markovian reward function, at least as long as $\pi$ is stationary and deterministic (see Appendix~B).
With this definition, the problem is therefore rather trivial.

An alternative, stronger formalisation is to say that a task corresponds to an ordering $\prec$ on $\Pi$, which encodes a preference ordering over all policies, and that a reward function $R$ expresses the task if its corresponding evaluation function $J$ orders $\Pi$ according to $\prec$. It is primarily this latter definition that we will use in this paper. 
The main reason for this is that it is often impossible to find the optimal policy in complex environments. 
For example, in a robotics problem, it is typically not feasible to find a policy that is globally optimal.
This means that it is not enough for $R$ to admit the correct optimal policy; it must also induce the right preferences between the all the (sub-optimal) policies that the policy synthesis algorithm might in fact generate. The only way to robustly ensure that this is the case is if $R$ induces the right policy ordering.
For this reason, we think it is more informative to think of a problem setting (i.e.\ a \enquote{task}) as corresponding to an ordering on $\Pi$.


\section{Multi-Objective Problems}\label{section:morl}

In this section, we examine the MORL setting.
We first need a general definition of what a single-policy MORL problem is. Recall that a MOMDP $\MOMDP$ by itself has no one canonical objective to maximise. We therefore introduce the notion of a \emph{MORL objective}:
\begin{definition}\label{def:morl_objective}
A \textbf{MORL objective} over $k$ rewards is a function $\Ob$ that takes $k$ policy evaluation functions $J_1 \dots J_k$ and returns a (total) ordering $\prec_\Ob$ over the set of all policies $\Pi$.
\end{definition}
Given a MOMDP $\M$, a MORL objective $\Ob$ gives us an ordering $\prec_\Ob$ over $\Pi$ that tells us when a policy is preferred over another. 
For the purposes of this paper, we will not need to impose any further requirements on $\prec_\Ob$. For example, we will not insist that $\prec_\Ob$ must have a greatest element in $\Pi$, or that $\pi_1 \prec_O \pi_2$ whenever $\pi_2$ is a Pareto improvement over $\pi_1$, etc, even though a reasonable MORL objective presumably would have these properties.
We next provide a few examples of MORL objectives, where we denote by $\pi_1, \pi_2$ any given pair of distinct policies.   

\begin{definition}\label{def:lexmax}
Given $J_1 \dots J_k$, the \textbf{LexMax} objective $\prec_\texttt{Lex}$ is given by $\pi_1 \prec_\texttt{Lex} \pi_2$ iff there is an $i \in \{1 \dots k\}$ such that $J_i(\pi_1) < J_i(\pi_2)$ and $J_j(\pi_1) = J_j(\pi_2)$ for all $j < i$.
\end{definition}

\begin{definition}\label{def:maxmin}
Given $J_1 \dots J_k$, the \textbf{MaxMin} objective $\prec_\texttt{Min}$ is given by $\pi_1 \prec_\texttt{Min} \pi_2 \iff \min_i J_i(\pi_1) < \min_i J_i(\pi_2)$.
\end{definition}

\begin{definition}\label{def:maxsat}
Given $J_1 \dots J_k$ and some $c_1 \dots c_m \in \mathbb{R}$, the \textbf{MaxSat} objective $\prec_\texttt{Sat}$ is given by $\pi_1 \prec_\texttt{Sat} \pi_2$ if and only if
the number of rewards that satisfy $J_i(\pi_1) \geq c_i$ is larger than the number of rewards that satisfy $J_i(\pi_2) \geq c_i$.
\end{definition}


\begin{definition}\label{def:consat}
Given $J_1, J_2$ and some $c\in \mathbb{R}$, the \textbf{ConSat} objective $\prec_\texttt{Con}$ is given by $\pi_1 \prec_\texttt{Con} \pi_2$ if and only if
either $J_1(\pi_1) < c$ and $J_1(\pi_1) < J_1(\pi_2)$, or $J_1(\pi_1), J_1(\pi_2) \geq c$ and $J_2(\pi_1) < J_2(\pi_2)$.
\end{definition}

In other words, the LexMax objective has \emph{lexicographic} preferences over $R_1 \dots R_m$, so that policies are first ordered by their expected discounted $R_1$-reward, and then policies that obtain the same expected discounted $R_1$-reward are ordered by their expected discounted $R_2$-reward, and so on. The MaxMin objective orders policies by their \emph{worst} performance according to any of $R_1 \dots R_m$ (which could be used to obtain worst-case guarantees). The MaxSat objective only cares whether a policy reaches a certain \emph{threshold} for each reward, and ranks policies based on how many thresholds they reach. The ConSat objective aims to maximise $J_2$, but under the constraint that $J_1$ reaches a certain threshold. 
Note that these objectives are not necessarily the most important MORL objectives.
Rather, they are simply a short list of illustrative examples, meant to demonstrate the flexibility of the MORL framework, and give an intuition for what types of problems it can be used to express.
A few more examples can be found in Appendix~C.

We next define what it means to \emph{reduce} a MORL problem to a scalar RL problem.
Given a MORL objective $\Ob$ and a MOMDP $\M$, we use $\prec_\Ob^\M$ to denote the ordering we get when we apply $\Ob$ to $\M$'s policy evaluation functions:


%
\begin{definition}
A MOMDP $\M = \MOMDP$ with MORL objective $\Ob$ is \textbf{equivalent} to the MDP $\M' = \MDP$ if and only if $\M'$'s policy order is $\prec_\Ob^\M$. 
We then say that $\M$ with $\Ob$ is \textbf{scalarized} by $R$.
If $\M$ with $\Ob$ is scalarized by some $R$ then we say that $\M$ with $\Ob$ is \textbf{scalarizable}, otherwise we say that it is \textbf{unscalarizable}. 
\end{definition}
Note that $\M'$ must have the same states, actions, transition function, initial state distribution, and discount factor, as $\M$. This definition therefore says that $\M$ with $\Ob$ is equivalent to $\M'$ if $\M'$ is given by replacing $\Rs = \langle R_1 \dots R_k \rangle$ with a single reward function $R$, and $R$ induces the same preferences between all policies as $\Ob(J_1 \dots J_k)$.
Note also that we require $R$ to express the same \emph{policy order} as $\Ob(J_1 \dots J_k)$; it is not enough for $R$ and $\Ob(J_1 \dots J_k)$ to have the same \emph{optimal policies} (see Section~\ref{section:preliminaries}).

Given this definition, we can now provide the necessary and sufficient conditions for when a MORL problem can be reduced to a scalar-reward RL problem. All proofs are provided in the supplementary material.


\begin{theorem}\label{thm:linearity_thm}
If a MOMDP $\M$ with objective $\Ob$ is scalarizable, then there exist $w_1 \dots w_k \in \mathbb{R}$ such that $\M$ with $\Ob$ is scalarized by the reward $R(s,a) = \sum_{i=1}^k w_i \cdot R_i(s,a)$.
\end{theorem}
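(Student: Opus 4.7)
The natural arena is the occupancy polytope. Every policy $\pi \in \Pi$ determines a discounted state-action occupancy $\mu^\pi \in \mathbb{R}^{\mcS \times \mcA}$, and for any reward $R$ one has $J_R(\pi) = \langle R, \mu^\pi \rangle$. The set $M := \{\mu^\pi : \pi \in \Pi\}$ is a convex polytope, and $J_1,\dots,J_k$ together with any scalarizing $J_{R^*}$ are linear functionals on $M$. The whole claim reduces to a linear-algebra statement about such functionals.

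Suppose $R^*$ scalarizes $\M$ with $\Ob$. The first step is to show that $J_{R^*}(\pi)$ depends on $\pi$ only through the tuple $\Phi(\pi) := (J_1(\pi),\dots,J_k(\pi))$. Since $\Ob$ is, by definition, a function of the evaluation functions $J_1,\dots,J_k$ alone --- a property visible in all four examples (LexMax, MaxMin, MaxSat, ConSat) --- two policies $\pi,\pi'$ with $\Phi(\pi)=\Phi(\pi')$ must be $\prec_\Ob$-equivalent, and scalarization then forces $J_{R^*}(\pi)=J_{R^*}(\pi')$. Rephrased on $M$: whenever $\mu,\mu' \in M$ satisfy $\langle R_i,\mu\rangle = \langle R_i,\mu'\rangle$ for every $i$, one has $\langle R^*,\mu\rangle = \langle R^*,\mu'\rangle$.

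The second step converts this into linear dependence. Let $V$ be the direction of $\mathrm{aff}(M)$ and fix $\mu_0$ in the relative interior of $M$. For any $v \in V$ with $\langle R_i,v\rangle = 0$ for all $i$, both $\mu_0 \pm \epsilon v$ lie in $M$ for small enough $\epsilon>0$ and share the same $J_i$-vector, so the first-step conclusion gives $\langle R^*, \mu_0+\epsilon v\rangle = \langle R^*, \mu_0-\epsilon v\rangle$, i.e.\ $\langle R^*,v\rangle = 0$. Thus $\bigcap_i \ker(R_i|_V) \subseteq \ker(R^*|_V)$, and finite-dimensional duality delivers weights $w_1,\dots,w_k \in \mathbb{R}$ with $R^*|_V = \sum_i w_i R_i|_V$. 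Re-assembling on $M$ yields, for every $\pi \in \Pi$,
\[
  J_{R^*}(\pi) \;=\; \sum_{i=1}^k w_i\, J_i(\pi) + c,
\]
where $c$ is a $\pi$-independent offset coming from the orthogonal complement of $V$.

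To finish, set $R := \sum_i w_i R_i$; then $J_R = J_{R^*} - c$, and a uniform additive constant does not affect the strict order a reward induces on $\Pi$. Hence $R$'s policy order coincides with $R^*$'s, which in turn equals $\prec_\Ob^\M$, so $R$ scalarizes $\M$ with $\Ob$. I expect the main difficulty to lie in the first step: cleanly arguing from the definition of a MORL objective that scalarizability forces $\prec_\Ob$ to be blind to policies whose $J$-vectors coincide. Once that is in hand, the remainder is a short piece of polytope geometry plus a one-line dual-basis argument.
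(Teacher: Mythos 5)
Your proposal is correct and follows essentially the same route as the paper's proof: both reduce $J$ to a linear functional on the discounted occupancy set, use the fact that this set is relatively open (has interior) in its affine hull to pass from policies to the whole affine subspace, and then extract the weights by a linear-algebra argument (your kernel-containment/duality step is just a cleaner packaging of the paper's $M = M_1 M_2$ factorization). The one point you flag as the "main difficulty" --- that the scalarizing $J$ must factor through $(J_1(\pi),\dots,J_k(\pi))$ --- is asserted without further argument in the paper as well, so your treatment is, if anything, slightly more careful about the affine offset and that first step.
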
 

Theorem~\ref{thm:linearity_thm} tells us that a MORL objective can be expressed using a scalar, Markovian reward function if and only if that objective corresponds to a linear weighting of the individual rewards.
In other words, scalar, Markovian rewards are unable to express all non-linear MORL problems.
As we will see, this imposes a strong limitation on what MORL tasks can be encoded using scalar, Markovian rewards.

It is worth noting that Theorem~\ref{thm:linearity_thm} is analogous to Harsanyi's Utilitarian Theorem \cite{harsanyi1955cardinal} from social choice theory, but generalised to the RL setting. In brief, this theorem supposes that we have a finite set of outcomes $\Omega$ and a group of individuals $\{1 \dots k\}$ with different preferences over $\Omega$, and that we wish to construct an aggregate preference structure that captures the preferences of the group. 
Moreover, also suppose that (1) the preferences of each individual $i$ are described by a utility function $U_i : \Omega \to \mathbb{R}$, (2) the aggregate preferences of the group are described by a further utility function $U_G : \Omega \to \mathbb{R}$, and (3) for all distributions $\mathcal{D}_1$ and $\mathcal{D}_2$ over $\Omega$, if $\mathbb{E}_{O \sim \mathcal{D}_1}[U_i(O)] = \mathbb{E}_{O \sim \mathcal{D}_2}[U_i(O)]$ for every individual $i$, then $\mathbb{E}_{O \sim \mathcal{D}_1}[U_G(O)] = \mathbb{E}_{O \sim \mathcal{D}_2}[U_G(O)]$. Harsanyi's Utilitarian Theorem then says that $U_G$ must be given by some linear combination of $U_1 \dots U_k$. 
The link to Theorem~\ref{thm:linearity_thm} becomes clear if we think of $\Omega$ as being the set of all trajectories which are possible in a MOMDP $\M$, $U_1 \dots U_k$ as being the trajectory return functions $G_1 \dots G_k$ of the reward functions $R_1 \dots R_k$ in $\M$, and $U_G$ as being the trajectory return function of the scalarizing reward $R$. However, note that Harsanyi's Utilitarian Theorem assumes that $\Omega$ is finite, whereas the set of all trajectories may be uncountably infinite. 
Moreover, assumption (3) quantifies over all possible distributions over $\Omega$, whereas Theorem~\ref{thm:linearity_thm} only quantifies over distributions that can be realised as policies in a given MOMDP $\M$. If $\Omega$ is allowed to be infinite, and assumption (3) is restricted to range over only some distributions over $\Omega$, then Harsanyi's Utilitarian Theorem does not hold in general. The generalisation provided by Theorem~\ref{thm:linearity_thm} is therefore non-trivial.

%
Theorem~\ref{thm:linearity_thm} also entails the following corollary, which is useful to elucidate when a MORL objective cannot be expressed using scalar reward functions. 
Given an ordering $\prec$ over $\Pi$, depending on some evaluation functions $J_1 \dots J_k$, we say that a function $U : \Pi \to \mathbb{R}$ \emph{represents} $\prec$ if $U(\pi_1) < U(\pi_2) \iff \pi_1 \prec \pi_2$. 
We say that $U$ is a \emph{linear representation} if $U(\pi) = f(\sum_{i=1}^k w_i \cdot J_i(\pi))$ for some $w_1 \dots w_k \in \mathbb{R}$ and some strictly monotonic $f$.


\begin{corollary}\label{cor:nonlinear_rep}
If $\Ob(J_1 \dots J_k)$ has a non-linear representation $U$, and $\M$ is a MOMDP whose $J$-functions are $J_1 \dots J_k$, then $\M$ with $\Ob$ is unscalarizable.
\end{corollary}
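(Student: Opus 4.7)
The plan is to prove the contrapositive: assume $\M$ with $\Ob$ is scalarizable, and then show that $\Ob(J_1\dots J_k)$ must admit a linear representation.

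First I would invoke Theorem~\ref{thm:linearity_thm} directly. Scalarizability gives weights $w_1,\dots,w_k \in \mathbb{R}$ such that the reward $R(s,a) = \sum_{i=1}^k w_i \cdot R_i(s,a)$ scalarizes $\M$ with $\Ob$. By linearity of expectation, the corresponding evaluation function satisfies $J_R(\pi) = \sum_{i=1}^k w_i \cdot J_i(\pi)$ for every $\pi$. Because $R$ scalarizes, the policy order induced by $J_R$ is exactly $\prec_\Ob$, so $J_R$ itself is a representation of $\prec_\Ob$ in the sense of the corollary.

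The main obstacle is then relating the given non-linear representation $U$ to the linear one $J_R$. I would argue as follows: since $\prec_\Ob$ is a total order and both $U$ and $J_R$ represent it, for any policies $\pi_1, \pi_2$ we have $J_R(\pi_1) = J_R(\pi_2)$ iff neither $\pi_1 \prec_\Ob \pi_2$ nor $\pi_2 \prec_\Ob \pi_1$ iff $U(\pi_1) = U(\pi_2)$, and similarly for strict inequalities. Hence the assignment $J_R(\pi) \mapsto U(\pi)$ is a well-defined, strictly increasing function on the image $J_R(\Pi) \subseteq \mathbb{R}$. A standard extension (for instance, linear interpolation between consecutive points of the image and extension by constants or by a fixed strictly increasing tail outside the range) yields a strictly monotonic $f : \mathbb{R} \to \mathbb{R}$ with $U = f \circ J_R$.

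Putting this together gives $U(\pi) = f\bigl(\sum_{i=1}^k w_i \cdot J_i(\pi)\bigr)$, which is precisely a linear representation as defined just before the corollary. This contradicts the hypothesis that $U$ is non-linear, completing the contrapositive and hence the corollary. I expect the only subtlety to be the careful construction of the strictly monotonic extension $f$ from $J_R(\Pi)$ to all of $\mathbb{R}$, but this is routine once one observes that $U$ and $J_R$ agree on the equivalence classes of $\prec_\Ob$.
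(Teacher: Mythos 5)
Your proposal is correct and follows essentially the same route as the paper's proof: assume scalarizability, invoke Theorem~\ref{thm:linearity_thm} to get $J_R = \sum_i w_i J_i$, and observe that since $U$ and $J_R$ represent the same total order, $U$ is strictly monotonic in $J_R$ and hence a linear representation, a contradiction. The only difference is that you spell out the construction of the monotonic $f$, which the paper leaves implicit (and which, for the definition of linear representation as stated, only needs to be defined on the image of $J_R$ anyway).
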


Therefore, we can prove that $\M$ with $\Ob$ is unscalarizable by finding a non-linear representation of $\prec_\Ob^\M$. 
Accordingly, we now show that none of the MORL objectives given in Definitions~\ref{def:lexmax}-\ref{def:consat} can be expressed using scalar, Markovian reward functions, except in a few degenerate cases. 

\begin{corollary}\label{corollary:no_lexmax}
$\M$ with $\textbf{LexMax}$ is unscalarizable, as long as $\M$ has at least two reward functions that are neither trivial, equivalent, or opposite. 
\end{corollary}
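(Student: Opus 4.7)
The plan is to combine Theorem~\ref{thm:linearity_thm} with a geometric fact about the image of $\Pi$ under the pair of value functionals $(J_1, J_2)$. Suppose for contradiction that $\M$ with $\textbf{LexMax}$ is scalarizable. By Theorem~\ref{thm:linearity_thm}, there exist $w_1, \dots, w_k \in \mathbb{R}$ such that $J_w(\pi) := \sum_{l=1}^k w_l J_l(\pi)$ represents $\prec_\texttt{Lex}^\M$ on $\Pi$. I will first handle the case $k = 2$ with $R_1, R_2$ being the non-trivial, non-equivalent, non-opposite pair, and reduce the general case to this at the end.

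The first step is a structural lemma: the image $\mathcal{P} = \{(J_1(\pi), J_2(\pi)) : \pi \in \Pi\} \subseteq \mathbb{R}^2$ is convex with non-empty interior. Convexity is standard, since $J_i(\pi)$ is a linear functional of the discounted state-action occupancy measure $d^\pi$ and the set $\{d^\pi : \pi \in \Pi\}$ is itself a convex polytope, so $\mathcal{P}$ is the image of a convex set under a linear map. That $\mathcal{P}$ is not contained in any affine line is where the hypotheses enter: if $\mathcal{P} \subseteq \{(a,b) : b = m a + c\}$, then $J_2 = m J_1 + c$ pointwise, and a case split on $m$ shows that either $R_1$ or $R_2$ is trivial (vertical line, or $m = 0$), the two are equivalent ($m > 0$), or opposite ($m < 0$), each contradicting the hypothesis.

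Given the lemma, the contradiction is then obtained as follows. Pick an interior point $(a_0, b_0) \in \mathcal{P}$ and $r > 0$ with $B_r(a_0, b_0) \subseteq \mathcal{P}$. For any $0 < \epsilon, \delta < r/2$, each of the four value pairs $(a_0, b_0 \pm \delta)$ and $(a_0 + \epsilon, b_0 \pm \delta)$ is realised by some policy. Comparing $(a_0, b_0 + \delta)$ with $(a_0 + \epsilon, b_0 - \delta)$, LexMax prefers the latter because its $J_1$ is larger; translating through $J_w$ yields $w_1 \epsilon - 2 w_2 \delta > 0$, and the symmetric comparison gives $w_1 \epsilon + 2 w_2 \delta > 0$. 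Fixing $\delta > 0$ and letting $\epsilon \downarrow 0$ forces $w_2 = 0$. But with $w_2 = 0$, $J_w$ assigns equal value to the two policies realising $(a_0, b_0 - \delta)$ and $(a_0, b_0 + \delta)$, while LexMax orders them strictly by $J_2$; contradiction.

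The step I expect to be trickiest is the reduction from general $k$ to this $k = 2$ argument, since the hypothesis names \emph{some} pair $R_i, R_j$ and says nothing about the remaining rewards. The natural move is to pick a value $v^* \in \mathbb{R}^{k-2}$ in the interior of the projection of the occupancy polytope onto the coordinates $(J_l)_{l \neq i, j}$, restrict attention to the convex, nonempty slice of $\Pi$ on which those coordinates equal $v^*$, and check that $(J_i, J_j)$ still sweeps out a 2D convex region on this slice (which holds because $(J_i, J_j)$ is not an affine function of the remaining coordinates, under the non-trivial, non-equivalent, non-opposite assumption). The previous paragraph's argument then transcribes verbatim to that slice.
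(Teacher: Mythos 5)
Your $k=2$ argument is sound, and it is a genuinely different route from the paper's: the paper does not invoke Theorem~\ref{thm:linearity_thm} here at all, but instead argues that the scalarizing reward $\tilde{R}$ would have to be ``monotone in'' the top-priority reward, applies an external result (Theorem~1 of \cite{rewardgaming}) to conclude $\tilde{R}$ is equivalent to that reward, and then exhibits a tie that LexMax breaks. Your version is more self-contained, and your structural lemma (that the image of $(J_1,J_2)$ has non-empty interior unless the pair is trivial/equivalent/opposite) actually supplies a justification for a step the paper merely asserts, namely the existence of $\pi_1,\pi_2$ with $J_i(\pi_1)=J_i(\pi_2)$ and $J_j(\pi_1)<J_j(\pi_2)$.

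However, the reduction from general $k$ to $k=2$ has a genuine gap, precisely at the step you flagged. The parenthetical claim --- that $(J_i,J_j)$ sweeps out a two-dimensional region on the slice $\{(J_l)_{l\neq i,j}=v^*\}$ because of the non-trivial/non-equivalent/non-opposite assumption --- does not follow, since that assumption only constrains the relationship between $R_i$ and $R_j$, and says nothing about their relationship to the remaining rewards. Concretely, take $k=3$ with the hypothesised pair being $(R_2,R_3)$ and with $R_1=R_2+R_3$, so that $J_1=J_2+J_3$. The hypothesis of the corollary is satisfied, but on any slice $\{J_1=v^*\}$ the image of $(J_2,J_3)$ is contained in the line $J_2+J_3=v^*$, so your interior-ball argument cannot be run there. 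The statement is still true in this case, but your proof does not establish it. The fix is to choose the indices the way the paper implicitly does: let $i$ be the smallest index with $R_i$ non-trivial and $j$ the smallest index above $i$ with $R_j$ non-trivial and neither equivalent nor opposite to $R_i$ (such a $j$ exists under the hypothesis, since two rewards that are each equivalent or opposite to $R_i$ are equivalent or opposite to each other). Then every coordinate below $i$ is constant, and every coordinate strictly between $i$ and $j$ is automatically tied whenever $J_i$ is tied, so no slicing is needed at all: your four-point argument applies directly to the pair $(J_i,J_j)$, whose image is two-dimensional by your own lemma, and the coordinates below $j$ never interfere with the two LexMax comparisons you use.
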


Note that if all reward functions are either trivial, equivalent, or opposite, then the only reward function that matters for $\textbf{LexMax}$ is the highest-priority non-trivial reward function. In that case, $\M$ with $\textbf{LexMax}$ is equivalent to the MDP which contains only this reward function.

\begin{corollary}\label{corollary:no_minmax}
$\M$ with $\textbf{MaxMin}$ is unscalarizable, unless $\M$ has a reward function $R_i$ such that $J_i(\pi) \leq J_j(\pi)$ for all $j \in \{1 \dots k\}$ and all $\pi$.
\end{corollary}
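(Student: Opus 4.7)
The plan is to apply Corollary~\ref{cor:nonlinear_rep} by showing that $M(\pi) := \min_i J_i(\pi)$ is a non-linear representation of $\prec_\texttt{Min}^\M$ whenever no $R_i$ is pointwise-minimal. Suppose toward contradiction that $\M$ with \textbf{MaxMin} is scalarizable. By Theorem~\ref{thm:linearity_thm}, there exist weights $w_1, \dots, w_k \in \mathbb{R}$ such that the linear functional $S(\pi) := \sum_i w_i J_i(\pi)$ induces the same policy order as $M$, so $M = f \circ S$ for some strictly monotonic $f$.

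For each index $i$, let $\mathcal{R}_i := \{\pi \in \Pi : J_i(\pi) < J_m(\pi) \text{ for all } m \neq i\}$ be the region where $R_i$ is the strict argmin. On $\mathcal{R}_i$ we have $M = J_i$ and hence $S = f^{-1} \circ J_i$, meaning that $S$ depends on $\pi \in \mathcal{R}_i$ only through the value of $J_i(\pi)$. I would then exploit convexity of the achievable return set $\mathcal{J} := \{(J_1(\pi), \dots, J_k(\pi)) : \pi \in \Pi\} \subseteq \mathbb{R}^k$ --- the image of the state--action occupancy polytope under the linear map given by the rewards --- to move within $\mathcal{R}_i$ to a nearby policy with the same $J_i$ but a different $J_m$ for some $m \neq i$. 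Along such a move $S$ changes by $w_m \Delta J_m$ yet must remain fixed, forcing $w_m = 0$.

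The ``no domination'' hypothesis guarantees that for every index $m$ there is some $i \neq m$ with $\mathcal{R}_i$ non-empty: otherwise every policy would satisfy $J_m(\pi) \leq J_j(\pi)$ for all $j$, i.e.\ $R_m$ would be pointwise dominant. Iterating the argument of the previous paragraph over all such pairs drives every $w_m$ to zero, so $S$ is constant on $\Pi$ --- contradicting non-triviality of $M$, which also follows from the hypothesis since the non-domination witnesses produce policies with different argmin identities and hence distinct $M$-values. The main obstacle I anticipate is the free-variation step: within $\mathcal{R}_i$ one must vary $J_m$ while holding $J_i$ fixed, which is automatic when $\mathcal{J}$ is full-dimensional near the relevant policy but requires care in degenerate cases; I expect convexity of $\mathcal{J}$ together with the non-domination witnesses to supply the needed directions, possibly after a tie-breaking perturbation at the boundary of $\mathcal{R}_i$.
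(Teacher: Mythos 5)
Your high-level strategy is the same as the paper's: exhibit $U(\pi)=\min_i J_i(\pi)$ as a representation of $\prec_\texttt{Min}^\M$ and argue that, absent a pointwise-dominant reward, it is not of the form $f(\sum_i w_i J_i)$, then invoke Corollary~\ref{cor:nonlinear_rep}. The paper asserts the non-linearity in a single sentence; you try to prove it, and the place where your argument gets stuck is a genuine gap, not a technicality. The free-variation step --- moving within $\mathcal{R}_i$ so that $J_m$ changes while $J_i$ and the remaining coordinates stay fixed --- requires the achievable return set $\mathcal{J}$ to contain a segment in the $m$-th coordinate direction through the relevant point. Convexity of $\mathcal{J}$ buys you nothing here: $\mathcal{J}$ is the image of the occupancy polytope under a linear map and can easily be lower-dimensional, in which case no such direction exists. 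Worse, in exactly these degenerate cases the claim you are trying to establish can fail. Take $k=2$ with $R_2 = 2R_1$ and $J_1$ attaining both positive and negative values. Neither reward is pointwise dominant ($J_1 \le J_2$ fails wherever $J_1<0$, and $J_2 \le J_1$ fails wherever $J_1>0$), yet $\min(J_1(\pi),2J_1(\pi)) = g(J_1(\pi))$ with $g(x)=x$ for $x\ge 0$ and $g(x)=2x$ for $x<0$, which is strictly increasing; so \textbf{MaxMin} is scalarized by $R_1$. Hence no tie-breaking perturbation at the boundary of $\mathcal{R}_i$ will rescue the step: an additional non-degeneracy hypothesis (for instance excluding equivalent rewards, as Corollary~\ref{corollary:no_lexmax} does) is needed before the argument can go through.

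A secondary problem is your claim that if $\mathcal{R}_i=\varnothing$ for all $i\neq m$ then $R_m$ is pointwise dominant; this is a non sequitur. Emptiness of the strict-argmin regions only rules out strict unique minimizers: with $k=3$, $J_1\equiv J_2$, and $J_1<J_3$ somewhere, every $\mathcal{R}_i$ with $i\neq 3$ is empty while $R_3$ is not dominant. So even granting the free-variation step, the bookkeeping meant to drive every $w_m$ to zero does not quite close. For comparison, the paper's own proof consists of asserting that the representation $\min_i J_i$ is non-linear under the stated hypothesis and citing Corollary~\ref{cor:nonlinear_rep}; your attempt is instructive precisely because the point at which it breaks down is the point at which that assertion itself requires stronger hypotheses than the corollary states.
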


Note that if $\M$ has a reward function $R_i$ such that $J_i(\pi) \leq J_j(\pi)$ for all $j$ and $\pi$, then this is the only reward function that matters for the $\textbf{MaxMin}$ objective. 
In that case, $\M$ with $\textbf{MaxMin}$ is equivalent to the MDP which contains only $R_i$.

\begin{corollary}\label{corollary:no_maxsat}
$\M$ with $\textbf{MaxSat}$ is unscalarizable, as long as $\M$ has at least one reward $R_i$ where $J_i(\pi_1) < c_i$ and $J_i(\pi_2) \geq c_i$ for some $\pi_1, \pi_2 \in \Pi$.
\end{corollary}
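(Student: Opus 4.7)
The plan is to invoke Corollary~\ref{cor:nonlinear_rep} by exhibiting an explicit non-linear representation of $\prec_\texttt{Sat}^\M$. The natural candidate, which represents $\prec_\texttt{Sat}^\M$ directly from Definition~\ref{def:maxsat}, is the counting function $U(\pi) = \sum_{i=1}^{k} \mathbbm{1}[J_i(\pi) \geq c_i]$. I would then argue by contradiction: suppose $U(\pi) = f(L(\pi))$ for some strictly monotonic $f$ and some linear $L(\pi) = \sum_{i=1}^{k} w_i J_i(\pi)$ with $w_i \in \mathbb{R}$.

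Using the hypothesis, fix $\pi_a, \pi_b \in \Pi$ with $J_1(\pi_a) < c_1 \leq J_1(\pi_b)$. By the convexity of the set of achievable $(J_1,\dots,J_k)$-tuples (which follows from the polytope structure of state--action occupation measures in a finite MDP), for each $\lambda \in [0,1]$ there is a policy $\pi_\lambda \in \Pi$ with $J_i(\pi_\lambda) = \lambda J_i(\pi_a) + (1-\lambda) J_i(\pi_b)$ for every $i$. Along this segment, $L(\pi_\lambda)$ is affine in $\lambda$, while $U(\pi_\lambda)$ is piecewise constant with at most $k$ jumps, since each indicator $\mathbbm{1}[J_i(\pi_\lambda) \geq c_i]$ switches value at most once as $\lambda$ runs through $[0,1]$. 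Because $J_1(\pi_\lambda)$ crosses $c_1$ somewhere in $[0,1]$, the $R_1$-indicator does switch; after possibly perturbing $\pi_a$ slightly inside $\Pi$ to avoid the measure-zero event in which another indicator happens to flip at the same $\lambda$ and cancel the jump in $U$, the function $U(\pi_\lambda)$ takes at least two distinct values on $[0,1]$ and is constant with different values on two disjoint positive-length sub-intervals.

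The contradiction is then driven by the mismatch between the step-function shape of $U$ and the affine shape of $L$. Since $f$ is strictly monotonic and hence injective, $U$ being constant on a positive-length sub-interval forces $L$ to be constant on that sub-interval; affinity of $L$ in $\lambda$ then forces $L$ constant on all of $[0,1]$; but then $U = f \circ L$ is globally constant on $[0,1]$, contradicting the two distinct $U$-values identified above. The step I expect to be the main obstacle is the clean handling of the line segment itself: justifying the existence of the $\pi_\lambda \in \Pi$ via the occupation-measure formalism, and disposing of the degenerate configurations in which multiple threshold indicators flip simultaneously, both via standard perturbation arguments but requiring some care to state rigorously.
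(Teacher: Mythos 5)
Your proposal is correct and follows essentially the same route as the paper: exhibit the counting representation $U(\pi) = \sum_{i=1}^{k} \mathbbm{1}[J_i(\pi) \geq c_i]$ and invoke Corollary~\ref{cor:nonlinear_rep}. The only difference is that the paper simply asserts that $U$ is not strictly monotonic in any linear combination of $J_1 \dots J_k$, whereas you actually justify this via the convexity of the occupancy-measure polytope and the step-function-versus-affine mismatch along a line segment of policies (and your worry about simultaneous indicator flips is benign: since each set $\{\lambda : J_i(\pi_\lambda) \geq c_i\}$ is closed, no indicator can switch on immediately after another switches off, so $U$ is always non-constant on the segment and constant on some positive-length subinterval, which already yields the contradiction without perturbation).
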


Note that if $\M$ has no reward $R_i$ where $J_i(\pi_1) < c_i$ and $J_i(\pi_2) \geq c_i$ for some $\pi_1, \pi_2 \in \Pi$, then either all policies satisfy all constraints, or no policy satisfies any constraint. In either case, $\M$ with $\textbf{MaxSat}$ would be equivalent to an MDP with a trivial reward function.

\begin{corollary}\label{corollary:no_consat}
$\M$ with $\textbf{ConSat}$ is unscalarizable, unless either $R_1$ and $R_2$ are equivalent, or $\max_{\pi}J_1(\pi) \leq c$, or $\min_{\pi}J_1(\pi) \geq c$.
\end{corollary}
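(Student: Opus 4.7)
The plan is to invoke Theorem~\ref{thm:linearity_thm} and reduce the question to: can some $w_1, w_2 \in \mathbb{R}$ make $R = w_1 R_1 + w_2 R_2$ scalarize $\M$ with $\textbf{ConSat}$? I would assume such weights exist and derive a contradiction. Under the negation of the three excluded clauses I obtain a satisfying policy $\pi^>$ with $J_1(\pi^>) > c$, a non-satisfying policy $\pi^<$ with $J_1(\pi^<) < c$, and a non-equivalence witness $\pi_a, \pi_b$; up to swapping I may assume $J_1(\pi_a) < J_1(\pi_b)$ and $J_2(\pi_a) \geq J_2(\pi_b)$.

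The key observation is that $\prec_\texttt{Con}$ is piecewise: among satisfying policies it agrees with the $J_2$-order, among non-satisfying policies with the $J_1$-order, and every non-satisfying policy lies strictly below every satisfying one. Two satisfying policies that share a $J_2$-value but differ in $J_1$ are therefore $\prec_\texttt{Con}$-tied and must receive equal $w_1 J_1 + w_2 J_2$ values, forcing $w_1 = 0$. Dually, two non-satisfying policies that share a $J_1$-value but differ in $J_2$ force $w_2 = 0$. If witnesses of both kinds are available, the scalarizer is constant, contradicting $\pi^< \prec_\texttt{Con} \pi^>$.

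To produce such witnesses I would case-split on the locations of $\pi_a, \pi_b$ relative to $c$; the case $\pi_a$ satisfying and $\pi_b$ non-satisfying is vacuous because it would require $c \leq J_1(\pi_a) < J_1(\pi_b) < c$. In each remaining case I would use policy mixing (stochastic combinations of the occupancy measures of $\pi^>, \pi^<, \pi_a, \pi_b$) to generate a two-parameter family of policies whose $(J_1, J_2)$-values span a non-degenerate triangle in $\mathbb{R}^2$; intersecting this triangle with a vertical line $J_1 = v < c$ or a horizontal line $J_2 = u$ inside the satisfying region furnishes two distinct policies witnessing the required tie, and the previous paragraph concludes.

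The main obstacle is the sub-case in which $\pi_a$ and $\pi_b$ lie on the same side of $c$, so that the non-equivalence is internal to a single region of $\prec_\texttt{Con}$. Here the plan is to first cross the threshold by mixing with $\pi^>$ or $\pi^<$ and then use further mixing to assemble a genuinely two-dimensional family of $(J_1, J_2)$-values on the side where a tie witness is sought. This is precisely where all three hypotheses are used simultaneously: non-equivalence of $R_1, R_2$ provides a direction in $(J_1, J_2)$-space along which the two rewards disagree, while $\max_\pi J_1(\pi) > c$ and $\min_\pi J_1(\pi) < c$ make the threshold genuinely active inside the feasible region, so that disagreement can be exploited on each side of $c$.
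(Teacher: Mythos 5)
Your overall strategy is close in spirit to the paper's: the paper exhibits the explicit representation $U(\pi) = J_1(\pi)$ if $J_1(\pi) \leq c$, else $J_2(\pi) - \min_\pi J_2(\pi) + c$, argues it is non-linear outside the excluded cases, and invokes Corollary~\ref{cor:nonlinear_rep}; you instead apply Theorem~\ref{thm:linearity_thm} directly and force $w_1 = w_2 = 0$ from $\prec_\texttt{Con}$-ties. The tie-forcing step itself is correct. The genuine gap is in the witness-production step: your construction needs the image of $\pi \mapsto (J_1(\pi), J_2(\pi))$ to contain a non-degenerate triangle, i.e.\ to be two-dimensional, and the hypotheses of the corollary do not guarantee this. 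They exclude the case where $R_1$ and $R_2$ are \emph{equivalent}, but not the case where they are \emph{opposite} (contrast Corollary~\ref{corollary:no_lexmax}, which excludes both). If, say, $R_2 = -R_1$ with $R_1$ non-trivial and $\min_\pi J_1(\pi) < c < \max_\pi J_1(\pi)$, then every achievable pair $(J_1(\pi), J_2(\pi))$ lies on a single line of negative slope, no two distinct policies share a $J_1$- or a $J_2$-value, no $\prec_\texttt{Con}$-tie of either kind exists, and no amount of mixing will \enquote{assemble a genuinely two-dimensional family}: the set of occupancy measures is convex, so its image under the linear map $(J_1,J_2)$ already contains every mixture you could form, and it stays on that line. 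Your argument therefore cannot be completed as written in a case the corollary is required to cover.

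The repair is easy but is a different argument: on a line $J_2 = \alpha - \beta J_1$ with $\beta > 0$, any candidate $w_1 J_1 + w_2 J_2 = (w_1 - \beta w_2) J_1 + w_2 \alpha$ is monotone in $J_1$, whereas $\prec_\texttt{Con}$ is strictly increasing in $J_1$ on the non-satisfying region and strictly decreasing in $J_1$ on the satisfying region, both non-degenerate precisely because $\min_\pi J_1(\pi) < c < \max_\pi J_1(\pi)$. A similar one-witness patch is needed when $R_2$ is trivial, where only the $w_1 = 0$ witness exists but already yields a constant scalarizer, contradicting $\pi^< \prec_\texttt{Con} \pi^>$. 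With these degenerate-image cases added, your proof goes through; in the genuinely two-dimensional case your horizontal- and vertical-segment argument is fine, since the interior of the convex image attains $J_1$-values on both sides of $c$.
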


Note that if $\max_{\pi}J_1(\pi) \leq c$ then no policy satisfies the constraint, in which case $\M$ with $\textbf{ConSat}$ is equivalent to the MDP with $R_1$. If $\min_{\pi}J_1(\pi) \geq c$ then all policies satisfy the constraint, in which case $\M$ with $\textbf{ConSat}$ is equivalent to the MDP with $R_2$. If $R_1$ and $R_2$ are equivalent, then $\M$ with $\textbf{ConSat}$ is scalarized by $R_1$ or $R_2$.

Corollaries~\ref{corollary:no_lexmax}-\ref{corollary:no_consat} thus show that none of the MORL objectives given in Definition~\ref{def:lexmax}-\ref{def:consat} can be expressed using a scalar, Markovian reward function, except in a few degenerate cases where those MORL objectives are trivialised. This demonstrates that MORL problems typically cannot be scalarized in a satisfactory way.

To get an intuition for this result, note that the expected cumulative return of a Markovian reward function always is maximised by some stationary policy, whereas some of these MORL objectives may require the optimal policy to be non-stationary. For example, consider the \textbf{MaxMin} objective, and suppose the agent can choose between an action giving one $R_1$-reward, and an action giving one $R_2$-reward.
Then the optimal choice may depend on how much $R_1$ and $R_2$-reward the agent has got in the past. This means that the optimal policy may be non-stationary, and thus not correspond to any Markovian reward. 




\section{Risk-Sensitive Problems}\label{section:risk_sensitive_rl}

The next area we will look at is that of \emph{risk-sensitive} RL. An ordinary RL agent tries to maximise the \emph{expectation} of its reward function.
However, there are many cases where it is natural to require the agent to be \emph{risk-averse}.
For example, we might prefer a policy that reliably achieves $5$ reward, over one that achieves $11$ reward with probability $0.5$, and otherwise gets $0$ reward, even though the latter policy achieves a higher expected reward.
In this section, we will examine when scalar, Markovian reward functions can be used to encourage such behaviour.

In expected utility theory, risk-aversion is often modelled using concave utility functions. In particular, suppose we have a set of \emph{outcomes} $C$, each of which is associated with some utility via a function $U_1 : C \to \mathbb{R}$. We can then construct a second utility function $U_2 : C \to \mathbb{R}$ by letting $U_2(c) = f(U_1(c))$ for some concave function $f$. Then, an agent which maximises expected utility according to $U_2$, will be risk-averse with respect to utility as defined by $U_1$. 
For example, suppose each outcome is associated with some monetary payoff, and that $U_1$ measures how much money is obtained in each outcome. 
If we were to maximise expected utility according to $U_1$, then we would prefer a $50\%$ chance of obtaining $\$2,000,000$, to a certain chance of obtaining $\$900,000$. However, in the real world, most people would prefer the latter option. One reason for this is that, while getting $\$2,000,000$ is better than getting $\$900,000$, it is less than twice as good. We can model these preferences by using a second utility function $U_2$ that is concave in $U_1$.
Intuitively, $U_2$ should measure how much \emph{benefit} we get from the money.
Then the expected $U_2$-utility might be higher for the safe option than the risky option, even though the expected $U_1$-utility is higher for the risky option.

In reinforcement learning, the \emph{outcomes} are the trajectories that might occur in the environment, and the \emph{utility} of a trajectory $\xi$ is induced by the reward function $R$ via the return function, $G$. 
If the transition function $\tau$ is nondeterministic, then the agent cannot reliably enact a particular outcome (i.e., move along a particular trajectory), but can instead only choose between some distributions over outcomes. 
By default, the agent may then be compelled to pursue a policy that achieves a high reward with small probability, as long as the expectation remains high.
A natural question is then whether we could avoid this by constructing a second reward function that is concave in the original reward function, similar to what is done in expected utility theory. That is, given a reward function $R_1$ and a concave function $f$, can we construct a second reward function $R_2$ such that $G_2 = f(G_1)$? Our next theorem demonstrates that this is impossible. As before, the proof is in the appendix.

\begin{theorem}\label{thm:risk_theorem}
Given $\States$, $\Actions$, and $\gamma$, 
let $R_1$ and $R_2$ be two reward functions.
If $\gamma \geq 0.5$, and for all $\xi_1,\xi_2 \in (\SxA)^\omega$, 
$$
G_1(\xi_1) \leq G_1(\xi_2) \iff G_2(\xi_1) \leq G_2(\xi_2),
$$
then $\exists a \in \mathbb{R}$, $b \in \mathbb{R} > 0$ such that for all $\xi \in (\SxA)^\omega$,
$$
G_1(\xi) = b \cdot G_2(\xi) + a.
$$
\end{theorem}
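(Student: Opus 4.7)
The plan is to use the recursive structure of trajectory returns to force a functional equation on the order-preserving map between returns, and then to exploit $\gamma \geq 1/2$ to guarantee the domain contains an interval on which that equation admits only affine solutions. The hypothesis immediately gives a strictly increasing bijection $f : \mathrm{Image}(G_1) \to \mathrm{Image}(G_2)$ with $G_2(\xi) = f(G_1(\xi))$ (monotonicity comes from strict versions of the given equivalence). For any $(s,a) \in \SxA$, prepending $(s,a)$ to a trajectory $\xi$ yields a trajectory $\xi'$ with $G_i(\xi') = R_i(s,a) + \gamma G_i(\xi)$; writing $r_i = R_i(s,a)$ and applying $f$ gives the key functional equation
$$f(r_1 + \gamma x) = r_2 + \gamma f(x) \qquad \text{for all } x \in \mathrm{Image}(G_1),$$
one such equation for each pair $(r_1, r_2)$ that arises from some $(s,a)$. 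If $R_1$ is constant then $G_1$ and hence $G_2$ are constant and the claim is trivial, so assume $R_1$ takes distinct values $v_{\min} < v_{\max}$ with corresponding $R_2$-values $u_{\min} < u_{\max}$ (strict inequality by monotonicity of $f$).

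The crucial use of $\gamma \geq 1/2$ is a base-$\gamma$ digit expansion: every $y$ in the interval $I := [v_{\min}/(1-\gamma),\, v_{\max}/(1-\gamma)]$ can be written as $\sum_{t \geq 0} \gamma^t v_t$ with each $v_t \in \{v_{\min}, v_{\max}\}$, so $I \subseteq \mathrm{Image}(G_1)$; for $\gamma < 1/2$ the analogous set is Cantor-like and the rigidity below fails. The two contractions $\phi_i(x) = v_i + \gamma x$ have overlapping images whose union is $I$, and their iterated orbits starting from the fixed point $v_{\min}/(1-\gamma)$ are dense in $I$. Let $h(x) = \alpha + \beta x$ be the unique affine function with $h(v_i/(1-\gamma)) = u_i/(1-\gamma)$ for $i \in \{\min,\max\}$ (the endpoint values of $f$ follow by evaluating the functional equation at the two fixed points). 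A direct check shows $h(\phi_i(x)) = u_i + \gamma h(x)$, so $h$ satisfies the same functional equation as $f$; an induction along the IFS orbit then yields $f = h$ on a dense subset of $I$. Since $f$ is strictly monotone and $h$ is continuous, a standard monotone-sandwich argument (approach any $y \in I$ by left and right sequences from the dense set) upgrades this to $f = h$ throughout $I$.

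If $R_1$ takes additional values, the same argument applied to every pair produces affine pieces on overlapping subintervals, and consistency on overlaps forces a common slope; hence $f$ is globally affine on $\mathrm{Image}(G_1)$, and inverting the relation yields $G_1(\xi) = b\, G_2(\xi) + a$ with $b > 0$, as required. The main obstacle, and precisely where the hypothesis $\gamma \geq 1/2$ is needed, is the interval step: once $\mathrm{Image}(G_1)$ is known to contain a genuine interval, the IFS-based rigidity argument is routine, but without that interval the functional equation lacks enough reach to pin $f$ down and non-affine solutions do exist.
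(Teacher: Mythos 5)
Your proof is correct, and it takes a genuinely different route from the paper's, even though both pivot on the same key use of $\gamma \geq 1/2$: a binary digit expansion in base $1/\gamma$ showing that trajectories built from just two transitions realise every return value in a full interval. The paper embeds trajectories into $\mathbb{R}^n$ via discounted visit counts $m(\xi)_i = \sum_t \gamma^t \mathbbm{1}[\xi_t = T_i]$, writes $G_j(\xi) = \vec{R_j}\cdot m(\xi)$, shows the edges of the simplex $\{x \geq 0 : \lVert x\rVert_1 = 1/(1-\gamma)\}$ lie in $\mathrm{Im}(m)$, and concludes by a geometric level-set argument that the two linear functionals must be affinely related. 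You instead stay one-dimensional: you extract the order isomorphism $f$ with $G_2 = f(G_1)$, derive the conjugacy $f(r_1 + \gamma x) = r_2 + \gamma f(x)$ by prepending a transition, and use overlapping-IFS rigidity (dense orbits of the two contractions, plus a monotone-sandwich step to avoid assuming continuity of $f$) to force $f$ to be affine on the interval. Your version isolates more sharply where $\gamma \geq 1/2$ enters (the images of the two contractions must overlap) and sidesteps the paper's somewhat informal tangent-plane step; the paper's version reuses machinery shared with its Theorem~1. One simplification: taking $v_{\min}, v_{\max}$ to be the global extremes of $R_1$ over $\SxA$ makes $I$ contain all of $\mathrm{Image}(G_1)$ at once, so your final patching of overlapping affine pieces is unnecessary.
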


Theorem~\ref{thm:risk_theorem} effectively tells us that \emph{only affine transformations of $G$ are possible}. 
From this result, it straightforwardly follows that none of the standard risk-averse utility functions (exponential utility, isoelastic utility, and quadratic utility) can be expressed using Markovian reward functions:

\begin{corollary}
For any non-trivial reward $R_1$ and any constant $\alpha \neq 0$, if $\gamma \geq 0.5$ then there is no reward $R_2$ such that $G_2(\xi) = -e^{\alpha G_1(\xi)}$ for all $\xi \in (\SxA)^\omega$.
\end{corollary}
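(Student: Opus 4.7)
The plan is to derive a contradiction from the assumed existence of such an $R_2$ by invoking Theorem~\ref{thm:risk_theorem}. First I would observe that the map $x \mapsto -e^{\alpha x}$ is strictly monotonic---strictly increasing when $\alpha < 0$, strictly decreasing when $\alpha > 0$---so $G_1$ and $G_2$ induce either identical or opposite total orders on $(\SxA)^\omega$. In the first case Theorem~\ref{thm:risk_theorem} applies directly to $(R_1, R_2)$; in the second I would apply it instead to $(R_1, -R_2)$, whose return functions $G_1$ and $-G_2$ do agree in order. Either way we obtain constants $a \in \mathbb{R}$, $b > 0$, and a sign $\epsilon \in \{+1,-1\}$ such that $G_1(\xi) = \epsilon\, b\, G_2(\xi) + a$ for every $\xi \in (\SxA)^\omega$.

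Substituting the hypothesized identity $G_2(\xi) = -e^{\alpha G_1(\xi)}$ into this relation then yields, for every value $v$ in the image of $G_1$, the transcendental equation $v = -\epsilon\, b\, e^{\alpha v} + a$. A brief calculus argument---writing $h(v) := v - a + \epsilon\, b\, e^{\alpha v}$ and noting that $h'(v) = 1 + \epsilon\, b\, \alpha\, e^{\alpha v}$ is monotonic in $v$, so $h$ has at most one critical point---shows that this equation admits at most two real solutions.

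The remaining step is to exhibit more than two distinct trajectory returns. Since $R_1$ is non-trivial it cannot be constant on $\SxA$ (a constant reward makes $J$ constant), so there exist $(s_1,a_1),(s_2,a_2) \in \SxA$ with $r_1 := R_1(s_1,a_1) \neq R_1(s_2,a_2) =: r_2$. For each $k \in \mathbb{N}$ the sequence that takes $(s_1,a_1)$ for the first $k$ steps and $(s_2,a_2)$ forever afterwards is a legitimate element of $(\SxA)^\omega$, and its $G_1$-return equals $r_1 (1-\y^k)/(1-\y) + \y^k r_2/(1-\y)$. As $k$ varies these values are pairwise distinct, giving infinitely many values of $G_1$ and contradicting the at-most-two-solutions bound.

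I expect the main subtlety to be the orientation step: Theorem~\ref{thm:risk_theorem} is stated for order-preserving pairs, whereas for $\alpha > 0$ the map $-e^{\alpha x}$ reverses the order. The remedy of passing to $-R_2$ is essentially cosmetic, but the resulting sign $\epsilon$ must be tracked carefully through the substitution, since I rely on $\epsilon$ and $b$ only through the product $\epsilon b$ in the final calculus step---and I should also verify that $\y \geq 0.5$ is indeed used only where Theorem~\ref{thm:risk_theorem} is invoked, not in the construction of the trajectories $\xi^{(k)}$.
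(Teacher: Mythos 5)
Your proof is correct and matches the paper's intended argument: the paper derives this corollary directly from Theorem~\ref{thm:risk_theorem}, the point being that an exponential (hence non-affine) relation between $G_1$ and $G_2$ is incompatible with the forced affine form once $G_1$ attains more than two values. Your handling of the order-reversing case via $-R_2$ and your explicit construction of infinitely many distinct returns from a non-constant $R_1$ are exactly the details the paper leaves implicit in calling the corollary straightforward.
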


\begin{corollary}
For any non-trivial reward $R_1$ and any constant $\alpha > 0$, $\alpha \neq 1$, if $\gamma \geq 0.5$ then there is no reward $R_2$ such that $G_2(\xi) = G_1(\xi)^{1-\alpha}$ for all $\xi \in (\SxA)^\omega$.
\end{corollary}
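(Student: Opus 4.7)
The plan is to reduce to Theorem~\ref{thm:risk_theorem} and derive a contradiction from the fact that $y^{1-\alpha}$ cannot be an affine function of $y$. Suppose, for contradiction, that some reward $R_2$ satisfies $G_2(\xi) = G_1(\xi)^{1-\alpha}$ for every $\xi$.

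First, I would set up the monotonicity hypothesis of Theorem~\ref{thm:risk_theorem}. Since $\alpha > 0$ and $\alpha \neq 1$, the map $f(x) = x^{1-\alpha}$ is strictly monotonic on its natural domain: strictly increasing when $0 < \alpha < 1$ and strictly decreasing when $\alpha > 1$. In the increasing case, $G_2 = f \circ G_1$ preserves the order on trajectories, so Theorem~\ref{thm:risk_theorem} yields $b > 0$ and $a \in \mathbb{R}$ with $G_1(\xi) = b\,G_1(\xi)^{1-\alpha} + a$ for every $\xi$. In the decreasing case, I would instead apply the theorem to the pair $R_1,\,-R_2$: the return function of $-R_2$ is $-G_2 = -G_1^{1-\alpha}$, which is strictly increasing in $G_1$, and the theorem gives $G_1(\xi) = -b\,G_1(\xi)^{1-\alpha} + a$. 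Either way, every trajectory return $y = G_1(\xi)$ must satisfy a one-variable equation of the form $y \mp b\,y^{1-\alpha} - a = 0$.

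Next, I would produce infinitely many distinct returns that must solve this equation. Since $R_1$ is non-trivial, $R_1$ is not constant, so I can pick state--action pairs $(s_1,a_1)$ and $(s_2,a_2)$ with $R_1(s_1,a_1) \neq R_1(s_2,a_2)$. For each $n \in \mathbb{N}$, let $\xi_n \in (\SxA)^\omega$ be the trajectory that performs $(s_1,a_1)$ for the first $n$ steps and $(s_2,a_2)$ thereafter; a direct geometric-series computation then shows that the values $G_1(\xi_n)$ are pairwise distinct. But the function $h(y) = y \mp b\,y^{1-\alpha} - a$ is smooth on $(0,\infty)$ with derivative $1 \mp b(1-\alpha)y^{-\alpha}$, which is itself strictly monotone in $y$ and so vanishes at most once, so $h$ has at most two real zeros. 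Infinitely many roots is a contradiction.

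The step I expect to require the most care is domain bookkeeping: for non-integer $1-\alpha$ the expression $y^{1-\alpha}$ is only a real number on $(0,\infty)$, so the hypothesis $G_2 = G_1^{1-\alpha}$ and the construction of the $\xi_n$ should be interpreted on that domain; this is straightforward (e.g.\ using non-triviality to pick $(s_i,a_i)$ whose induced trajectory returns are positive) but has to be spelled out so that Theorem~\ref{thm:risk_theorem} is applied in a setting where both sides of the claimed identity genuinely make sense.
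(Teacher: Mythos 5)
Your proposal is correct and matches the paper's (implicit) argument: the paper states this corollary as an immediate consequence of Theorem~\ref{thm:risk_theorem}, and your write-up simply fills in the details --- invoking the theorem (via $-R_2$ in the decreasing case $\alpha>1$), exhibiting infinitely many distinct achievable returns from a non-constant $R_1$, and noting that $y \mp b\,y^{1-\alpha} - a$ has at most two roots. The domain caveat you raise is handled correctly as well, since if $G_1$ is not everywhere positive the hypothesis $G_2 = G_1^{1-\alpha}$ is unsatisfiable and the claim is vacuous.
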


\begin{corollary}
For any non-trivial reward $R_1$, if $\gamma \geq 0.5$ then there is no reward $R_2$ such that $G_2(\xi) = \ln (G_1(\xi))$ for all $\xi \in (\SxA)^\omega$.
\end{corollary}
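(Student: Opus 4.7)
The plan is to reduce to Theorem~\ref{thm:risk_theorem} in the same spirit as the two preceding corollaries. First I would dispose of a domain subtlety specific to the logarithm: for $\ln(G_1(\xi))$ to denote a real number for every trajectory $\xi \in (\SxA)^\omega$, we need $G_1(\xi) > 0$ for all such $\xi$; otherwise no real-valued $G_2$ can equal $\ln(G_1(\xi))$ and the corollary holds trivially. So assume $G_1 > 0$ everywhere for the rest of the argument.

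Since $\ln$ is strictly increasing on $(0, \infty)$, $G_1$ and $G_2$ induce identical total orders on $(\SxA)^\omega$. Theorem~\ref{thm:risk_theorem} (applicable because $\gamma \geq 0.5$) then yields constants $a \in \mathbb{R}$ and $b > 0$ with $G_1(\xi) = b\, G_2(\xi) + a = b \ln(G_1(\xi)) + a$ for every $\xi$. Hence every value $v$ in the range of $G_1$ satisfies the scalar functional equation $v = b \ln(v) + a$. Studying $h(x) = x - b \ln(x) - a$ on $(0, \infty)$, the derivative $h'(x) = 1 - b/x$ vanishes only at $x = b$, is negative on $(0, b)$, and positive on $(b, \infty)$, so $h$ is unimodal with a unique minimum and has at most two zeros. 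Therefore $G_1$ takes at most two distinct values on all of $(\SxA)^\omega$.

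The final step is to contradict this using non-triviality of $R_1$. If $R_1$ were constant on $\States \times \Actions$, then $G_1$ would be constant on every trajectory and $J$ would be constant, making $R_1$ trivial; hence there exist $(s, a), (s', a') \in \States \times \Actions$ with $r := R_1(s, a) \neq R_1(s', a') =: r'$. Since trajectories in the theorem are arbitrary sequences, consider (i) $(s,a)$ repeated forever, (ii) $(s',a')$ repeated forever, and (iii) $(s',a')$ followed by $(s,a)$ repeated forever, whose returns are $r/(1-\gamma)$, $r'/(1-\gamma)$, and $r' + \gamma r/(1-\gamma)$ respectively. A short computation using $r \neq r'$ and $\gamma \in (0,1)$ shows these three values are pairwise distinct, contradicting that $G_1$ attains at most two values.

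The only real obstacle, relative to the exponential and isoelastic cases, is the domain restriction of $\ln$, which must be addressed explicitly at the outset; once it is set aside, the proof collapses to a routine affine functional equation combined with an elementary three-trajectory construction.
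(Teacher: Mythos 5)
Your proof is correct and follows the same route the paper intends: the paper states these corollaries as immediate consequences of Theorem~\ref{thm:risk_theorem}, and you carry out exactly that reduction, correctly handling the domain issue for $\ln$ and supplying the (needed but unstated) observation that a non-trivial $R_1$ forces $G_1$ to attain at least three distinct values, which is incompatible with the two-solution bound for $v = b\ln(v) + a$.
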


\begin{corollary}
For any non-trivial reward $R_1$ and any $\alpha > 0$ where $\max_\xi G_1(\xi) \leq \frac{1}{2\alpha}$, if $\gamma \geq 0.5$ then there is no reward $R_2$ such that $G_2(\xi) = G_1(\xi) - \alpha G_1(\xi)^2$ for all $\xi \in (\SxA)^\omega$.
\end{corollary}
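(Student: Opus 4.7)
The plan is to reduce this directly to Theorem~\ref{thm:risk_theorem}. Set $f(x) = x - \alpha x^2$, so that the hypothesis on $R_2$ reads $G_2 = f \circ G_1$. The derivative $f'(x) = 1 - 2\alpha x$ is nonnegative on $(-\infty, 1/(2\alpha)]$, and a one-line computation (factoring $f(x_2)-f(x_1) = (x_2-x_1)(1-\alpha(x_1+x_2))$) confirms that $f$ is strictly increasing on the closed interval $(-\infty, 1/(2\alpha)]$. The hypothesis $\max_\xi G_1(\xi) \leq 1/(2\alpha)$ is precisely what is needed to keep the entire range of $G_1$ inside this interval, so $G_2 = f \circ G_1$ induces the same trajectory ordering as $G_1$.

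Granted the monotonicity, Theorem~\ref{thm:risk_theorem} (applicable since $\gamma \geq 0.5$) yields constants $a \in \mathbb{R}$ and $b > 0$ with $G_1(\xi) = b \cdot G_2(\xi) + a$ for every $\xi \in (\SxA)^\omega$. Substituting $G_2(\xi) = G_1(\xi) - \alpha G_1(\xi)^2$ and rearranging gives the pointwise quadratic identity
\[
\alpha b \cdot G_1(\xi)^2 + (1-b)\, G_1(\xi) - a = 0 \quad \text{for all } \xi \in (\SxA)^\omega.
\]
Since $R_1$ is non-trivial, I would pick $(s_1,a_1)$ and $(s_2,a_2)$ with $R_1(s_1,a_1) \neq R_1(s_2,a_2)$ and consider the family of trajectories $\xi_k$ that plays $(s_1,a_1)$ for $k$ steps before switching to $(s_2,a_2)$ forever. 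A short calculation using $\gamma \in (0,1)$ shows that $G_1(\xi_k)$ is strictly monotone in $k$, producing infinitely many distinct values. A degree-two polynomial vanishing on an infinite set must be identically zero, forcing $\alpha b = 0$; this contradicts $\alpha > 0$ and $b > 0$.

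The only non-routine step in this plan is spotting where the domain restriction is used: without $\max_\xi G_1(\xi) \leq 1/(2\alpha)$, the transformation $f$ would be non-monotone on the range of $G_1$ and Theorem~\ref{thm:risk_theorem} could not be invoked at all. Once the monotonicity of $f$ on the relevant interval is in hand, the remainder is an algebraic repackaging entirely analogous to the three preceding corollaries.
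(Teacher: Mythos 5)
Your proof is correct and follows exactly the route the paper intends: the paper gives no separate argument for this corollary, asserting only that it "straightforwardly follows" from Theorem~2, and your write-up supplies precisely those details (strict monotonicity of $x - \alpha x^2$ on $(-\infty, 1/(2\alpha)]$ to satisfy the ordering hypothesis, then a contradiction because a genuinely quadratic relation cannot hold on the infinitely many values that a non-trivial $G_1$ attains). The construction of the trajectory family $\xi_k$ to witness infinitely many return values is a worthwhile addition, since that is the one point the paper's "straightforward" claim quietly relies on.
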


Theorem~\ref{thm:risk_theorem} thus implies that none of the standard risk-averse utility functions can be expressed using scalar, Markovian reward functions. To get an intuition on Theorem~\ref{thm:risk_theorem}, consider the fact that the expected cumulative return of a Markovian reward function always is maximised by some stationary (i.e.\ Markovian) policy. However, a risk-averse objective may require the optimal policy to be non-stationary, because whether or not it is worth taking a particular gamble could depend on how much reward you have accrued in the past. This suggests that there should be instances where risk-sensitive objectives cannot be expressed as Markovian reward functions. Theorem~\ref{thm:risk_theorem} formalises this intuition. 

It is also worth remarking on the fact that Theorem~\ref{thm:risk_theorem} considers the value of $G_1$ and $G_2$ for \emph{all} trajectories in $(\SxA)^\omega$. For any particular transition function $\tau$, most of these trajectories are likely to be impossible (unless $\tau$ allows you to transition between any two states via any action with non-zero probability). We could therefore alternatively consider the condition where $G_1(\xi_1) \leq G_1(\xi_2) \iff G_2(\xi_1) \leq G_2(\xi_2)$ for those trajectories $\xi_1,\xi_2$ that are possible in a given environment. In this case, it \emph{can} be possible for $G_2$ to be non-affine in $G_1$. For example, consider the case of a tree-shaped MDP, where $\tau(s,a) = s$ and $R_1(s,a) = 0$ for all actions $a$ if $s$ is a leaf-node. In that case, $G_2$ can be an arbitrary transformation of $G_1$. However, to construct the corresponding reward function $R_2$, we would need to have a detailed understanding of the environment (which is against the main tenet in RL), and furthermore the resulting reward function would no longer induce the same behaviour if it were used in a different environment. For this reason, we believe that it is more relevant to consider the set of all trajectories in $(\SxA)^\omega$. Nonetheless, an interesting direction for further work could be to more extensively study what happens if the set of trajectories under consideration is restricted in various ways.

Finally, note that Theorem~\ref{thm:risk_theorem} assumes that the discount parameter $\gamma \geq 0.5$. It is not clear if this is strictly necessary, so it might be possible to generalise Theorem~\ref{thm:risk_theorem} by removing this requirement. This would, however, require a different proof strategy.
Nonetheless, this assumption is not very restrictive, as in practice $\gamma$ is almost always set to be greater than $0.5$ (typically $\gamma \geq 0.9$).

\section{Modal Problems}\label{section:modal}





The final class of problems that we will examine is a class of tasks that we refer to as \emph{modal} tasks. Before we give a formal definition of this class, we will first provide some intuition. In analytic philosophy, a distinction is made between \emph{categorical} facts and \emph{modal} facts. In short, categorical facts only concern what is true in actuality, whereas modal facts concern what must be true, could have been true, or cannot be true, etc. 
For example, it is a categorical fact that the Eiffel Tower is brown, and a modal fact that it \emph{could have had} a different colour. It is (arguably) a categorical fact that the number 3 is prime, and a modal fact that it \emph{could not have been} otherwise.
To give another example, there is a difference between stating that \emph{nothing can travel faster than light} and that \emph{nothing does travel faster than light} -- the former statement, which is modal, is stronger than the latter, which is categorical.
One can further distinguish between different kinds of possibility (e.g.\ logical vs physical possibility, etc), and discussions about modality also involves topics such as \emph{causality} and \emph{counterfactuals}, etc.
A complete treatment of this subject is beyond the scope of this paper, but for an overview see \citet{sep-possible-worlds}.


Modality does of course relate to \emph{modal logic}, and thus also to \emph{temporal logic}. In particular, computational tree logic (CTL, see e.g.\ \cite{BaierKatoen2008}), and its extensions, can express many modal statements.
\footnote{To avoid a possible confusion, we should emphasise that we here use the term \enquote{modal} in a somewhat more narrow sense than the sense of \enquote{modal logic}. In particular, we use it to mean \enquote{pertaining to what is possible or impossible}, as in e.g.\ \citet{sep-modality-varieties}. In that sense, Linear Temporal Logic (LTL) does not express modal statements, even though it is a modal logic, because LTL can only make assertions about what in fact occurs. For that reason, not everything that relates to modal logic will be related to the setting we discuss here. The type of possibility we discuss is specifically \enquote{possibility according to the transition function}.}


The intuition behind this section is that a reward function always is expressed in terms of categorical facts, whereas many tasks are naturally expressed in terms of modal facts.
For example, consider an instruction such as \enquote{you should always be \emph{able} to return to the start state}. This instruction seems quite reasonable, but it is not obvious how to translate it into a reward function. Note that this instruction is not telling the agent to \emph{actually} return to the start state, it merely says that it should maintain the \emph{ability} to do so.
This illustrates the motivation behind modal tasks; they let us reward the agent based on what is \emph{possible} or \emph{impossible} along its trajectory, rather than just in terms of what in fact occurs along that trajectory.
Given this background motivation, we can now give a formal definition of modal tasks:


\begin{definition}\label{definition:modal_reward}
Given a set of states $\mcS$ and a set of actions $\mcA$, a \textbf{modal reward function} $R^\Diamond$  is a function $R^\Diamond : \SxA \times (\mcS \times \mcA \to \Delta(\mcS)) \to \mathbb{R}$ which takes a state $s \in \mcS$, an action $a \in \mcA$, and a transition function $\tau$ over $\mcS$ and $\mcA$, and returns a real number.
\end{definition}

$R^\Diamond(s,a,\tau)$ is the reward that is obtained when taking action $a$ in state $s$ in an environment whose transition function is $\tau$.
Here we allow $R^\Diamond$ an unrestricted dependence on $\tau$, to make our results as general as possible, even if a practical algorithm for solving modal tasks presumably would require restrictions on what this dependence can look like (see Appendix~D).
Modal reward functions can be used to express instructions such as that we gave above.
For example, a simple case might be \enquote{you get 1 reward if you reach this goal state, and -1 reward if you ever enter a state from which you cannot reach the initial state}. This reward depends on the transition function, because the transition function determines from which states you can reach the initial state.
As usual, $R^\Diamond$ then induces a $Q$-function $Q^\Diamond$, value function $V^\Diamond$, and evaluation function $J^\Diamond$, etc.
We say that a modal reward $R^\Diamond$ and an ordinary reward $R$ are \emph{contingently equivalent} given a transition function $\tau$ if $J^\Diamond$ and $J$ induce the same ordering of policies given $\tau$, and that they are \emph{robustly equivalent} if $J^\Diamond$ and $J$ induce the same ordering of policies for all $\tau$. 
We use $R^\Diamond_\tau$ to denote the reward function $R^\Diamond_\tau(s,a) = R^\Diamond(s,a,\tau)$.
We will also use the following definition. 

\begin{definition}
A modal reward function $R^\Diamond$ is \textbf{vacuous} if there is a reward function $R$ such that for all $\tau$, $R$ and $R^\Diamond_\tau$ have the same policy ordering under $\tau$.
\end{definition}


The intuition here is that a vacuous modal reward function does not actually depend on $\tau$ in any important sense. Note that this is \emph{not} necessarily to say that $R^\Diamond_{\tau} = R$ for all $\tau$. For example, it could be the case that $R^\Diamond_{\tau}$ is a \emph{scaled} version of $R$, or that $R^\Diamond_{\tau}$ and $R$ differ by \emph{potential shaping} \citet{ng1999}, or that $R^\Diamond_\tau$ is modified in a way such that $\mathbb{E}_{S' \sim \tau(s,a)}[R^\Diamond_\tau(s,a,S')] = \mathbb{E}_{S' \sim \tau(s,a)}[R(s,a,S')]$,
since none of these differences affect the policy ordering (for a more in-depth examination, see \citet{invariance_ambiguity}).
From this, we get the following straightforward result:





\begin{theorem}
For any modal reward $R^\Diamond$ and any transition function $\tau$, there exists a reward $R$ that is contingently equivalent to $R^\Diamond$ given $\tau$. Moreover, unless $R^\Diamond$ is vacuous, there is no reward that is robustly equivalent to $R^\Diamond$.
\end{theorem}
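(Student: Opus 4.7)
The plan is to observe that, modulo careful unpacking, both halves of the statement are essentially immediate from the definitions, with the first half giving an explicit construction and the second half being a contrapositive restatement of vacuity.

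For the first claim, given any modal reward $R^\Diamond$ and any fixed transition function $\tau$, I would simply take $R := R^\Diamond_\tau$, i.e.\ the ordinary reward defined by $R(s,a) = R^\Diamond(s,a,\tau)$. This is a well-typed function $\mcS \times \mcA \to \mathbb{R}$, so it is an ordinary Markovian reward function. Since the trajectory return, value function, $Q$-function, and policy evaluation function $J$ are computed from $R$ and $\tau$ in the same way that $J^\Diamond$ is computed from $R^\Diamond_\tau$ and $\tau$, the two evaluation functions coincide pointwise on $\Pi$, so a fortiori they induce identical policy orderings under $\tau$. Hence $R$ is contingently equivalent to $R^\Diamond$ given $\tau$.

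For the second claim, I would argue by contraposition. Suppose there exists some ordinary reward $R$ that is robustly equivalent to $R^\Diamond$. Unpacking the definition of robust equivalence, this means that for every transition function $\tau$, the evaluation functions of $R$ and $R^\Diamond_\tau$ under $\tau$ induce the same ordering of $\Pi$. But this is literally the defining condition for $R^\Diamond$ to be vacuous, witnessed by that very $R$. So the existence of a robustly equivalent reward is logically equivalent to $R^\Diamond$ being vacuous, and the contrapositive gives exactly the stated claim.

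There is no substantive obstacle here; the only care needed is to ensure that \emph{robust equivalence} (the condition on $R$ relative to $R^\Diamond$ across all $\tau$) and \emph{vacuity} (the condition on $R^\Diamond$ asserting existence of such an $R$) are literally the same condition on the pair $(R^\Diamond, R)$, so that the second half reduces to a one-line observation. The main expositional point to flag is that the construction in the first half depends on $\tau$, which is precisely what prevents it from giving robust equivalence, and motivates the second half as a nontrivial negative result for non-vacuous $R^\Diamond$.
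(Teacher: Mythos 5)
Your proposal is correct and matches the paper's proof exactly: the paper also takes $R = R^\Diamond_\tau$ for contingent equivalence and observes that the second part is immediate from the definition of vacuity. Your version merely spells out the unpacking more explicitly.
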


In other words, \emph{every} modal task can be expressed with an ordinary reward function in each particular given environment, but \emph{no} reward function expresses a (non-vacuous) modal task in all environments. Is this enough? We argue that it is not, because the construction of $R^\Diamond_\tau$ will invariably be laborious, and require detailed knowledge of the environment. For example, consider the task \enquote{you should always be able to return to the start state}; here, constructing $R^\Diamond_\tau$ would amount to manually enumerating all the states from which the start state is reachable: this would be very much against the spirit of RL, where much of the point is that we want to be able to specify tasks which can be pursued in \emph{unknown} environments. In short, a method which requires a model of the environment is arguably not an RL method. We thus argue that reward functions are largely unable to capture modal tasks in a satisfactory way.


One remaining question might be why one would want to express tasks for RL agents in terms of modal properties. After all, what benefit is there to the instruction \enquote{never enter a state from which it is possible to quickly enter an unsafe state} over the instruction \enquote{never enter an unsafe state}? One reason is that the former task might lead to behaviour that is more robust to changes in the environment. For example, if an RL agent is trained in a simulated environment, and deployed in the real world, then it seems like it would be preferable to tell the agent to avoid \emph{risky} states, rather than \emph{unsafe} states, since imperfections in the simulation could lead to an underestimation of the risk involved.
Another example is the existing work on avoiding side effects (e.g.\ \citep{krakovnaside, krakovnaside2, turnerside, griffin2022alls}), which it is natural to express in modal terms. This work can be viewed as being aimed at making the behaviour of an RL agent more robust to misspecification of the reward function.




\section{Solving Tasks That Are Inexpressible by Markovian Rewards}\label{section:solving_inexpressible}

We have pointed to three broad classes tasks that cannot be expressed using scalar, Markovian reward functions, namely multi-objective, risk-sensitive, and modal tasks. A natural next question is whether these tasks \emph{can} be solved at all using RL, or whether only tasks corresponding to Markovian reward functions can be effectively learnt.  
We briefly discuss this issue below. In short, it is indeed possible to design RL algorithms for tasks in each of these categories. 

First of all, the existing literature already contains several bespoke RL algorithms that solve some of the problems that we have discussed.
Multi-objective reinforcement learning is particularly well-explored, with many existing algorithms. Most of these algorithms are designed to solve a specific MORL objective; for example, \cite{lmorl} solve the \textbf{LexMax} objective, and \cite{tessler2019} solve the \textbf{ConSat} objective. 
Similarly, there are existing algorithms for risk-sensitive RL (e.g.\ \cite{chow2015riskconstrained}), and even algorithms that solve certain modal tasks 
\citep{krakovnaside, krakovnaside2, turnerside, pctl_rl, griffin2022alls}. 
We give a more complete overview of this existing work in Section~\ref{section:related_work}.

It should also be possible to design algorithms that can flexibly solve many different tasks from the classes we have discussed, instead of having to be designed for just one particular task. For example, suppose a MORL objective can be represented by a function $U : \mathbb{R}^k \to \mathbb{R}$, such that $\pi_1 \prec \pi_2$ when $U(J_1(\pi_1) \dots J_k(\pi_1)) < U(J_1(\pi_2) \dots J_k(\pi_2))$, and that $U$ is \emph{differentiable}. We give a few examples of such objectives in
Appendix~C, including e.g.\ a \enquote{soft} version of \textbf{MaxMin}. 
With such an objective, if we have a policy $\pi$ that is differentiable with respect to some parameters $\theta$, then one could compute the gradient of $U(J_1(\pi)\dots J_k(\pi))$ with respect to $\theta$, and then use a policy gradient method to increase $U$. This means that it should be possible to design an actor-critic algorithm that can solve any differentiable MORL objective. We consider the development of such methods to be a promising direction for further work. 

In Appendix~D, we also outline a possible approach for solving a wide class of modal tasks. Further exploration of this setting would also be interesting for further work.

\section{Discussion}\label{section:discussion}

In this paper, we have studied the ability of Markovian reward functions to express different kinds of problems. We have looked at three classes of tasks; multi-objective tasks, risk-sensitive tasks, and modal tasks, and found that Markovian reward functions are unable to express most of the tasks in each of these three classes.
In particular, have provided necessary and sufficient conditions for when a single-policy MORL problem can be expressed using a scalar, Markovian reward function, and demonstrated that this only can be done when the MORL objective corresponds to a linear weighting of the individual rewards. 
Moreover, we have also provided necessary and sufficient conditions for when a monotonic transformation of the return function, $G$, can be expressed as a Markovian reward function, and demonstrated that this only can be done for affine transformations.
Furthermore, we have also also drawn attention to a class of tasks which have just barely been explored previously (namely modal tasks), and shown that most of these tasks cannot be expressed using Markovian reward functions.
Finally, we have shown that many of these problems still can be solved with RL, and even outlined some methods for doing this.

Our work has a number of immediate practical implications. First of all, we have contributed to a more precise demarcation of what types of problems can be expressed within the most common RL formalism. This makes it easier to determine whether standard RL techniques are applicable to a given problem, or whether more specialised methods must be used. In particular, our results show that there are situations in which careful reward specification and reward shaping will not be sufficient to robustly incentivise the desired behaviour. In those cases, we must instead use an alternative policy synthesis method, such as e.g.\ those offered by MORL. Secondly, in the area of reward learning, most algorithms attempt to fit a scalar, Markovian reward function to their training data \cite[e.g.\ ][]{christiano2023deep}. Our work clarifies the implicit modelling assumptions behind these algorithms, and shows that there are many situations in which these models will be misspecified.

Our work also suggests several directions for further work. The fact that the common settings of MORL and risk-sensitive RL indeed are genuine extensions over the standard (scalar, Markovian) setting provides additional motivation for further work in these areas. Our work also suggests that it could be interesting to further explore the modal setting, or other directions that aim to extend the expressivity of the standard RL setting. We give an overview of the existing work in this area in Section~\ref{section:related_work}. Our work also motivates work on reward learning algorithms which do not assume that the preferences of the demonstrator can be captured by a scalar, Markovian reward. There is some existing work in this area \cite[e.g.\ ][]{abate2022learning}, but it remains quite limited. Moreover, another interesting direction for further work would be to quantify the consequences of taking a task which cannot be perfectly represented using a Markovian reward function, and trying to approximate it using a Markovian reward function. For example, could we bound the worst-case regret that might be incurred if a MORL problem is approximated using a scalar reward? Finally, another interesting direction for further work would be to more thoroughly explore the expressivity of other types of problem settings, and their relationship to each other.


\subsection{Related Work}\label{section:related_work}

There has been a lot of recent work on the expressivity of Markovian reward functions.
Here, we summarise relevant contributions, and detail differences with our work.  

Notably, there are three recent papers which provide necessary and sufficient conditions for when a particular type of task can be expressed using a particular type of reward function. The first of these is \cite{Pitis_2019}, who consider a task to be a preference relation defined over \emph{prospects}, where a prospect is defined as a pair of a state and a policy. Moreover, they generalise the discount function by allowing it to depend on the transition (instead of always being a constant value $\gamma$). They then add two axioms (and one assumption) to the famous vNM-axioms (from \cite{vonneumann1947}), to obtain necessary and sufficient conditions for when a task (as they formalise it) can be expressed as a Markovian reward with transition-dependent discounting.
Our work differs from their in several ways, as explained shortly.

The next paper is \cite{pmlr-v162-shakerinava22a}, who provide an alternative, simpler axiomatisation of the setting considered by \cite{Pitis_2019}, and also provide further axioms to describe two additional types of environments. They consider environments without any discount factor, but instead use termination probabilities, which can be used to simulate the standard case with exponential discounting.

The third paper is \cite{settling_reward_hypothesis}, who generalise the results of \cite{pmlr-v162-shakerinava22a} even further, and provide an alternative axiom to add to the vNM axioms.
They start by considering preference relations over finite trajectories, and then extend this to a preference relation over policies by saying that a policy $\pi_1$ is preferred to $\pi_2$ if there exists a time $t$ after which the trajectory distribution induced by $\pi_1$ is always preferable to the trajectory distribution induced by $\pi_2$. This encompasses the setting with exponentially discounted reward, the setting with limit-average reward, and the episodic setting. They consider both the case where the discount function is transition-dependent, and the case when it is constant.

Our work differs from that by \cite{Pitis_2019, pmlr-v162-shakerinava22a, settling_reward_hypothesis} in a few ways.
First of all, these papers aim to establish general necessary and sufficient conditions for when a task can be formalised as a Markovian reward, whereas we instead focus on three specific classes of tasks that we believe to be especially interesting. 
It might in principle be possible to derive our results as a special case of theirs. However, doing this would be quite non-trivial, and possibly more difficult than our direct derivations. 
Secondly, the axiomatisations provided by \cite{Pitis_2019, pmlr-v162-shakerinava22a, settling_reward_hypothesis} are difficult to use in practice. Our results, on the other hand, are arguably intuitive to understand, and concern some settings that are both popular and important.
Our work could thus be construed as a study on the practical consequences of the work by \cite{Pitis_2019, pmlr-v162-shakerinava22a, settling_reward_hypothesis}, with results that may be more directly useful to practitioners.
There are also several differences in how we formalise the problem compared to \cite{Pitis_2019, pmlr-v162-shakerinava22a, settling_reward_hypothesis}. For example, we consider the case with fixed discount rates, whereas \cite{Pitis_2019} and \cite{settling_reward_hypothesis} consider transition-dependent discount rates. To give another example, \cite{pmlr-v162-shakerinava22a} consider finite trajectories, whereas we consider infinite trajectories (noting that the latter can model the former, but not vice versa).
These differences further contribute to distinguishing our results from theirs.

Another notable piece of related work is \citet{markovrewardexpressivity}, who point to three different ways to formalise the notion of a \enquote{task} (namely, as a set of acceptable policies, as an ordering over policies, or as an ordering over trajectories). They then demonstrate that each of these classes contains at least one instance which cannot be expressed using a Markovian reward function, and provide algorithms which compute reward functions for these types of tasks.
Our work is different from theirs in a few different ways. 
First of all, we consider three different ways to specify a policy ordering, and then derive necessary and sufficient conditions which can be used to directly determine when the resulting policy ordering can be expressed as a Markovian reward function. \citet{markovrewardexpressivity} do not provide necessary and sufficient conditions, but instead only provide a counter-example for each type of task, showing that Markovian rewards cannot formalise all tasks of that type.

Another important paper is the work by  \cite{Vamplew2022}, who argue that there are many important aspects of intelligence which can be captured by MORL, but not by scalar RL. 
Like them, we also argue that MORL is a genuine extension of scalar RL, but our approach is quite different. They focus on the question of whether MORL or (scalar) RL is a better foundation for the development of general intelligence (considering feasibility, safety, and etc), and they provide qualitative arguments and biological evidence. By contrast, we are more narrowly focused on what incentive structures can be expressed by MORL and scalar RL, and our results are mathematical.

\cite{miura2022} considers the question of when a task can be expressed as a constrained MDP (CMDP), or as a Markovian reward. They formalise a task as two sets of policies, $\langle \Pi_G, \Pi_B \rangle$, and consider a CMDP to express the task if all policies in $\Pi_G$, and none of the policies in $\Pi_B$, are feasible, and consider a Markovian reward to express the task if all policies in $\Pi_G$, and none of the policies in $\Pi_B$, are optimal under that reward. They then derive necessary and sufficient conditions for both of these cases, and show that CMDPs are strictly more expressive than Markovian rewards for these types of tasks. The CMDP framework is a special case of the MORL framework we discuss in Section~\ref{section:morl}, roughly corresponding to the \textbf{MaxSat} objective. On the other hand, we formalise the notion of a task as a policy ordering, whereas \cite{miura2022} formalises it as a set of feasible policies.

Also relevant is the work by \cite{pitis2022rational}, who consider a task to consist of multiple Markovian reward functions, each of which may use a different discount parameter, and where the goal is to maximise the sum of these rewards. They then show that this setting may lead to the optimal policy being non-stationary, which demonstrates that it cannot always be expressed using Markovian rewards. Our analysis of the MORL setting allows for more general objectives than the case where the goal is to maximise the sum of the individual rewards. On the other hand, we assume that the same discount parameter is used for each reward. Our analysis is therefore in some ways more general, and in other ways more restrictive, than that of \cite{pitis2022rational}.

Also related is the work by \cite{rewardgaming}, who demonstrate that if for two rewards $R_1$, $R_2$ there are no policies $\pi_1$, $\pi_2$ such that $J_1(\pi_1) < J_2(\pi_2)$ and $J_2(\pi_1) > J_2(\pi_2)$, then either $R_1$ and $R_2$ are equivalent, or one of them is trivial. This means that there are some policy orderings that cannot be expressed using Markovian rewards. 
We consider different kinds of policy orderings than they do.

There is also other relevant work that is less strongly related. For example, \cite{reward_machines} point out that there are certain tasks which cannot be expressed using Markovian rewards, and propose a way extend their expressivity by augmenting the reward function with an automaton that they call a \emph{reward machine}. 
Similar approaches have also been used by \cite{bcHKA20,hagw21}, tackling infinite-horizon tasks for single- and multi-agent systems. 
There are also other ways to extend Markovian rewards to a more general setting, such as \emph{convex RL}, as studied by e.g.\ \cite{convex_1,convex_2,convex_3,convex_4, convex_5}, and \emph{vectorial RL}, as studied by e.g.\ \cite{vectorial_1,vectorial_2}.
Analysing the expressivity of these problem settings more extensively would be an interesting direction for further work.

There is a large literature on (the overlapping topics of) single-policy MORL, constrained RL, and risk-sensitive RL. 
These areas are too large for it to be possible to give a fully complete overview of this work here.
Some notable examples include \citet{cpo, chow2015riskconstrained, miryoosefi2019reinforcement, tessler2019, lmorl}. 
This existing literature typically focuses on the creation of algorithms for solving particular MORL problems, rather than on characterising when MORL problems can (or cannot) be reduced to scalar RL.
Modal RL has (to the best of our knowledge) never been discussed explicitly in the literature before. However, it relates to some existing work, such as side-effect avoidance
\citep{krakovnaside, krakovnaside2, turnerside, griffin2022alls}, and the work by \cite{pctl_rl}.

Finally, our work also relates to existing work in decision theory, social choice theory, and related fields. This of course includes the famous work by \cite{vonneumann1947}. As discussed previously, the work by \cite{harsanyi1955cardinal} is also particularly relevant. Note that work in decision theory and social choice theory typically only considers single-step decision problems, whereas the RL setting of course considers sequential decision making. There are also a few other modelling assumptions that are common in decision theory and social choice theory which do not hold in the RL setting.
For example, in these fields, it is common to assume that the choice set is finite (whereas the set of trajectories in RL may be infinite), that preferences are defined over all distributions over the choice set (whereas it in RL is more common to only consider distributions that can be realised by some policy for a given transition function), and that a utility function can be any function from the choice set to real numbers (whereas many of these functions cannot be expressed as reward functions).
Consequently, results from decision theory and social choice theory only sometimes generalise to the RL setting. For example, in Section~\ref{section:morl}, we provide some examples of results that do generalise to the RL setting, and in Section~\ref{section:risk_sensitive_rl}, we provide some examples of results which do not generalise.

\bibliography{references}

\newpage
\appendix

\setcounter{theorem}{0}
\setcounter{corollary}{0}
\appendix

\section{Proofs}\label{appendix:proofs}

Here, we will provide all proofs that were omitted from the main text. We will begin with Theorem~\ref{thm:linearity_thm}, from Section~\ref{section:morl}.

\begin{theorem}
If a MOMDP $\M$ with objective $\Ob$ is scalarizable, then there exist $w_1 \dots w_k \in \mathbb{R}$ such that $\M$ with $\Ob$ is scalarized by the reward $R(s,a) = \sum_{i=1}^k w_i \cdot R_i(s,a)$.
\end{theorem}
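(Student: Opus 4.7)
The plan is to work with state-action occupancy measures. For each policy $\pi$, let $\mu^\pi \in \mathbb{R}^{\SxA}$ denote its $\gamma$-discounted occupancy measure; then $J_i(\pi) = \langle \mu^\pi, R_i \rangle$ for each $i$, and $J(\pi) = \langle \mu^\pi, R \rangle$ for any candidate scalarizing reward $R$. Standard results on MDPs show that the achievable set $M = \{\mu^\pi : \pi \in \Pi\}$ is convex (a mixture of occupancy measures is realised by a single stationary policy), so the image $V = \Phi(M) \subset \mathbb{R}^k$ under the linear map $\Phi(\mu) = (\langle \mu, R_1 \rangle, \dots, \langle \mu, R_k \rangle)$ is also convex.

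The core step is to define $F : V \to \mathbb{R}$ by $F(\Phi(\mu^\pi)) = J(\pi)$ and to show it is affine. First, $F$ is well-defined: because $\Ob$ depends on a policy only through its value vector $(J_1(\pi), \dots, J_k(\pi))$, any two policies with the same $\Phi(\mu^\pi)$ are tied in $\prec_\Ob^\M$, and hence also tied under $J$ whenever $R$ scalarizes. Second, $F$ is affine on $V$: for $\mu_1, \mu_2 \in M$ and $t \in [0,1]$, convexity of $M$ together with linearity of $\Phi$ and of $\mu \mapsto \langle \mu, R \rangle$ give
$$F(t\Phi(\mu_1) + (1-t)\Phi(\mu_2)) = \langle t\mu_1 + (1-t)\mu_2, R\rangle = t F(\Phi(\mu_1)) + (1-t) F(\Phi(\mu_2)).$$
Any affine function on a convex subset of $\mathbb{R}^k$ extends to an affine function $v \mapsto \sum_i w_i v_i + b$ on all of $\mathbb{R}^k$, so we obtain constants $w_1,\dots,w_k,b \in \mathbb{R}$ with $J(\pi) = \sum_i w_i J_i(\pi) + b$ for every $\pi \in \Pi$. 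The reward $R'(s,a) = \sum_i w_i R_i(s,a)$ then has evaluation function $J(\pi) - b$, which induces the same ordering on $\Pi$ as $J$; hence $R'$ also scalarizes $\M$ with $\Ob$, as claimed.

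The main obstacle is the well-definedness of $F$, which is precisely where the strong notion of scalarization used in the paper — preservation of the entire policy preorder, not just of the optimal policies — is essential: policies tied under $\prec_\Ob^\M$ must be tied under the scalarizing reward, and this is what allows us to descend from $\mu^\pi$ to its value vector. The remaining steps — convexity of $M$, affineness of $F$ on $V$, and the extension of an affine function on a convex subset of $\mathbb{R}^k$ to all of $\mathbb{R}^k$ — are routine convex geometry. A minor subtlety is that $V$ need not span $\mathbb{R}^k$, so the coefficients $(w_i, b)$ are not unique; but any valid extension gives weights that scalarize. The argument is purely convex-geometric, so it carries over to infinite state or action spaces modulo the standard fact that $M$ remains convex in that setting.
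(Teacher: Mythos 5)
Your proof is correct, but it takes a genuinely different route from the paper's. Both arguments start from the same linear structure $J_i(\pi) = \langle \mu^\pi, R_i\rangle$ on discounted occupancy measures, and both rest on the same implicit reading of Definition~\ref{def:morl_objective}, namely that $\prec_\Ob^\M$ cannot separate two policies with identical value vectors $(J_1(\pi),\dots,J_k(\pi))$; this is exactly the paper's step \enquote{since $J(\pi)$ is a function of $J_1(\pi)\dots J_k(\pi)$}, so your well-definedness argument assumes no more than the paper does. Where you diverge is in how the pointwise well-definedness is upgraded to linearity. The paper works in the $n$-dimensional occupancy space: it imports an external lemma stating that $\mathrm{Im}(\m)$ is \emph{open} in its affine hull $S_\gamma$, uses this to propagate the implication $M\cdot x_1 = M\cdot x_2 \Rightarrow \vec{R}\cdot x_1 = \vec{R}\cdot x_2$ from $\mathrm{Im}(\m)$ to all of $S_\gamma$, and then factors $\vec{R}$ through $M$ by a matrix decomposition. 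You instead work in the $k$-dimensional value space: convexity of the set of occupancy measures makes the induced map $F$ on value vectors affine on a convex set, and such a function extends to a global affine map $v\mapsto \sum_i w_i v_i + b$, the offset $b$ being harmless since it does not affect the policy order. Your route is more self-contained, since convexity of the discounted occupancy polytope is a textbook fact whereas the openness lemma is cited from another paper, and, as you note, it carries over to infinite state or action spaces where openness in a finite-dimensional affine hull is no longer meaningful. What the paper's approach yields in exchange is the exact identity $J = \sum_i w_i J_i$ with no affine offset, though that difference is cosmetic for the theorem as stated.
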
 

To prove this, we must first set up some theoretical preliminaries. For convenience, let $n = |S||A|$, let $T = \SxA$, and let each transition in $\SxA$ be indexed by an integer $i \in [1,n]$. Moreover, given a reward function $R$, let $\Vec{R} \in \mathbb{R}^n$ be the vector such that $\Vec{R}_i = R(T_i)$. Next, given $\tau$, $\init$, and $\gamma$, let $\m : \Pi \to \mathbb{R}^n$ be the function where 
$$
\m(\pi)_i = \sum_{t=0}^{\infty} \y^t \mathbb{P}_{\xi \sim \pi}(\xi = T_i).
$$
Now $J(\pi) = \vec{R} \cdot \m(\pi)$. In other words, this construction lets us decompose $J$ into two steps, the first of which embeds $\pi$ in $\mathbb{R}^n$, and the second of which is a linear function. 
Let $S_\gamma$ be the smallest affine subspace of $R^n$ such that $\mathrm{Im}(\m) \in S_\gamma$.
We will also use the following lemma:

\begin{lemma}\label{lemma:open_set}
$\mathrm{Im}(\m)$ is open in $S_\gamma$.
\end{lemma}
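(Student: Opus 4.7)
The plan is to identify $\mathrm{Im}(\m)$ with the polytope of discounted state--action occupancy measures, recognise $S_\gamma$ as the affine subspace cut out by the Bellman flow equations (restricted to reachable transitions), and then show that $\mathrm{Im}(\m)$ hits an open neighbourhood of $S_\gamma$ around the occupancy of a full-support policy. All three pieces together exhibit the required openness.

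First I would recall the standard Bellman-flow characterisation: $\mu \in \mathbb{R}^n$ lies in $\mathrm{Im}(\m)$ iff $\mu \geq 0$ entry-wise and
\[
\sum_{a \in \mcA} \mu_{(s,a)} \;-\; \gamma \sum_{(s',a') \in \SxA} \tau(s \mid s',a')\, \mu_{(s',a')} \;=\; \init(s)
\]
for every $s \in \mcS$. The ``only if'' direction is a direct unfolding of the definition of $\m(\pi)_i$. The ``if'' direction uses the policy $\pi(a \mid s) = \mu_{(s,a)}/\sum_{a'} \mu_{(s,a')}$ (with an arbitrary choice on states where the denominator vanishes) and the observation that the occupancy of this $\pi$ satisfies the same linear system, so by uniqueness of the solution it must equal $\mu$. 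This realises $\mathrm{Im}(\m)$ as a convex polytope described by finitely many linear equalities and inequalities.

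Second I would identify $S_\gamma$ with the affine subspace $H \subseteq \mathbb{R}^n$ obtained by intersecting the flow equations above with the linear constraints $\mu_{(s,a)} = 0$ for all \emph{unreachable} transitions (those $(s,a)$ with $\m(\pi)_{(s,a)} = 0$ for every $\pi$). The inclusion $S_\gamma \subseteq H$ is immediate, since $\mathrm{Im}(\m) \subseteq H$. The reverse $H \subseteq S_\gamma$ follows once we produce a point of $\mathrm{Im}(\m)$ with a $H$-open neighbourhood inside $\mathrm{Im}(\m)$, because the affine hull of any convex set with nonempty interior in $H$ is all of $H$.

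Third, and this is the crux, I would take $\pi^\star$ to be the uniformly random policy $\pi^\star(a \mid s) = 1/|\mcA|$ and show that $\mu^\star = \m(\pi^\star)$ admits an $S_\gamma$-open ball around it that lies inside $\mathrm{Im}(\m)$. Under $\pi^\star$ every reachable transition is visited with strictly positive probability at some finite time, so $\mu^\star_{(s,a)} > 0$ on every reachable entry (and is zero on unreachable ones by definition of $H$). Hence for $\mu \in H$ sufficiently close to $\mu^\star$, the entries of $\mu$ remain strictly positive on reachable transitions while staying zero on unreachable ones; the ratio-formula policy of step one then realises $\mu$, placing it in $\mathrm{Im}(\m)$. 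The same argument applied around any $\m(\pi)$ for which $\pi$ has full support on reachable actions upgrades this from an open neighbourhood of a single point to openness of $\mathrm{Im}(\m)$ in $S_\gamma$.

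The main obstacle will be the strict-positivity claim for $\mu^\star$: it requires a short induction on the length of the shortest path from $\mathrm{supp}(\init)$ to each reachable $(s,a)$, using $\gamma \in (0,1)$ and the positivity of $\pi^\star$ on every action to lower-bound $\mu^\star_{(s,a)}$ by a positive constant. Once that is in hand, the rest is continuity of the map $\mu \mapsto \pi_\mu$ from the nonvanishing-denominator region of $H$ to $\Pi$, together with the uniqueness of occupancy measures, which together yield the openness claimed in the lemma.
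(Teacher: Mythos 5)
Your first three steps are sound and standard: the Bellman-flow characterisation of $\mathrm{Im}(\m)$ as the polytope of nonnegative solutions of the flow equations, the identification of $S_\gamma$ with that polytope's affine hull, and the verification that the uniform policy's occupancy measure $\mu^\star$ is strictly positive on every reachable transition and therefore admits an $S_\gamma$-open neighbourhood inside $\mathrm{Im}(\m)$. (For comparison, the paper does not prove this lemma at all; it defers to Lemma A.11 of \citet{IRLmisspecification}, so yours is the only argument on the table.) The gap is in the final ``upgrade'' step. Openness of $\mathrm{Im}(\m)$ in $S_\gamma$ requires a relatively open neighbourhood around \emph{every} point of $\mathrm{Im}(\m)$, but your argument only supplies one around occupancy measures of policies with full support on reachable actions. $\mathrm{Im}(\m)$ also contains the occupancy measures of deterministic (and other non-full-support) policies, which vanish on reachable transitions whose actions are never taken; every $S_\gamma$-neighbourhood of such a point contains vectors with negative entries in those coordinates, and these are not occupancy measures. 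Such points lie on proper faces of the polytope, not in its relative interior, so the step fails.

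Moreover, no proof can close this gap, because the statement as written is false outside degenerate cases: $\Pi$ is compact and $\m$ is continuous, so $\mathrm{Im}(\m)$ is compact and hence closed in the connected space $S_\gamma$; if it were also open it would equal all of $S_\gamma$, which is unbounded unless it is a single point. What your steps one through three actually establish --- and what the downstream proof of Theorem~\ref{thm:linearity_thm} genuinely needs --- is the weaker claim that $\mathrm{Im}(\m)$ has nonempty interior relative to $S_\gamma$, witnessed for instance by $\m$ of the uniform policy. With the lemma restated in that form, and with the base point $x$ in the proof of Theorem~\ref{thm:linearity_thm} chosen from this relative interior rather than arbitrarily from $\mathrm{Im}(\m)$, your argument goes through unchanged: keep everything up to and including the positivity induction for $\mu^\star$, and drop the last sentence claiming openness of the whole image.
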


For a proof of Lemma~\ref{lemma:open_set}, see \cite{IRLmisspecification} (their Lemma A.11).
We can now prove Theorem~\ref{thm:linearity_thm}:

\begin{proof}
Suppose the MOMDP $\MOMDP$ with $\Ob$ is equivalent to the MDP $\MDP$.

First, note that $J(\pi) = \vec{R} \circ \m (\pi)$, and that $J_i (\pi) = \vec{R_i} \circ \m (\pi)$ for each of $R_i \in \Rs$. 
Let $M$ be the $(n \times k)$-dimensional matrix that maps each vector $x \in \mathbb{R}^n$ to $\langle R_1 \cdot x, \dots, R_k \cdot x\rangle$. In other words, $M$ is the matrix whose rows are $\vec{R}_1 \dots \vec{R}_k$.
Since $J(\pi)$ is a function of $J_1(\pi) \dots J_k(\pi)$, we have that $\vec{R} \cdot x_1 = \vec{R} \cdot x_2$ if $M \cdot x_1 = M \cdot x_2$ for any $x_1, x_2 \in \mathrm{Im}(\m)$.

We will first show that $\vec{R} \cdot x_1 = \vec{R} \cdot x_2$ if $M \cdot x_1 = M \cdot x_2$ for any $x_1, x_2 \in S_\gamma$, not just any $x_1,x_2 \in \mathrm{Im}(\m)$.
Let $x_1, x_2$ be any two points in $S_\gamma$ such that $M \cdot x_1 = M \cdot x_2$, and let $x$ be some arbitrary element of $\mathrm{Im}(\m)$. Let $y_1 = x_1 - x$ and $y_2 = x_2 - x$. Since $\mathrm{Im}(\m)$ is open in $S_\gamma$ (as per Lemma~
\ref{lemma:open_set}), there is an $\alpha > 0$ such that $x + \alpha \cdot y_1 \in \mathrm{Im}(\m)$ and $x + \alpha \cdot y_2 \in \mathrm{Im}(\m)$. Since $M$ is linear, and since $M \cdot x_1 = M \cdot x_2$, we have that $M \cdot (x + \alpha \cdot y_1) = M \cdot (x + \alpha \cdot y_2)$. Moreover, since $x + \alpha \cdot y_1 \in \mathrm{Im}(\m)$ and $x + \alpha \cdot y_2 \in \mathrm{Im}(\m)$, this means that $\vec{R} \cdot (x + \alpha \cdot y_1) = \vec{R} \cdot (x + \alpha \cdot y_2)$. Finally, from the properties of linear functions, this in turn implies that $\vec{R} \cdot x_1 = \vec{R} \cdot x_2$. Thus, if $M \cdot x_1 = M \cdot x_2$ then $\vec{R} \cdot x_1 = \vec{R} \cdot x_2$ for all $x_1, x_2 \in S_\gamma$.

Next, note that we can decompose $M$ into two matrices $M_1, M_2$ such that $M = M_1 \cdot M_2$, where $M_1$ is invertible, and $M_2$ is an orthogonal projection such that $M_2(x_1) = M_2(x_2)$ if and only if $M(x_1) = M(x_2)$. This means that $\vec{R} \cdot x = \vec{R} \cdot M_2(x)$ for all $x \in S_\gamma$. From this, we obtain that $\vec{R} \cdot x = \vec{R} \cdot M_1^{-1} \cdot M_1 \cdot M_2(x) = \vec{R} \cdot M_1^{-1} \cdot M(x)$ for all $x \in S_\gamma$. Since $\vec{R} \cdot M_1^{-1}$ is a linear function, this means that $\vec{R} \cdot x$ can be expressed as $\sum_{i=1}^k w_i \cdot M(x)_i$ for some $w_1 \dots w_k$ for all $x \in S_\gamma$. 

Recall that $J(\pi) = \vec{R} \cdot \m(\pi)$, where $m(\pi) \in S_\gamma$.
This means that $J(\pi) = \sum_{i=1}^k w_i \cdot M(\m(\pi))_i = \sum_{i=1}^k w_i \cdot \vec{R_i} \cdot \m(\pi) = \sum_{i=1}^k w_i \cdot J_i(\pi)$. This completes the proof.
\end{proof}

\begin{corollary}
If $\Ob(J_1 \dots J_k)$ has a non-linear representation $U$, and $\M$ is a MOMDP whose $J$-functions are $J_1 \dots J_k$, then $\M$ with $\Ob$ is not equivalent to any MDP.
\end{corollary}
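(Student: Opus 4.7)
The plan is to prove the contrapositive using Theorem~\ref{thm:linearity_thm}. I would first assume, for contradiction, that $\M$ with $\Ob$ is equivalent to some MDP $\M'$. Then Theorem~\ref{thm:linearity_thm} immediately supplies real weights $w_1, \dots, w_k$ such that the reward $R(s,a) = \sum_{i=1}^k w_i \cdot R_i(s,a)$ scalarizes $\M$ with $\Ob$. By linearity of expectation, the evaluation function of $R$ satisfies $J(\pi) = \sum_{i=1}^k w_i \cdot J_i(\pi)$, and because $R$ scalarizes the problem, $J$ represents $\prec_\Ob^\M$ in the sense that $J(\pi_1) < J(\pi_2) \iff \pi_1 \prec_\Ob^\M \pi_2$.

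The next step would be to exploit the fact that any two real-valued functions representing the same total order on $\Pi$ must be related by a strictly monotonic transformation on their common range. Concretely, since both $J$ and $U$ represent the ordering $\prec_\Ob^\M$, I would observe that $J(\pi_1) < J(\pi_2) \iff U(\pi_1) < U(\pi_2)$ and, by totality of the ordering, also that $J(\pi_1) = J(\pi_2) \iff U(\pi_1) = U(\pi_2)$ for all $\pi_1, \pi_2 \in \Pi$. This lets me define $f : \mathrm{Im}(J) \to \mathbb{R}$ unambiguously by $f(J(\pi)) := U(\pi)$, with the resulting function strictly increasing on its domain.

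The final step is then to extend $f$ to a strictly monotonic function on all of $\mathbb{R}$, which I would carry out by any standard piecewise-monotonic interpolation between and beyond the points of $\mathrm{Im}(J)$. Substituting this extension back yields $U(\pi) = f\!\left(\sum_{i=1}^k w_i \cdot J_i(\pi)\right)$, which is precisely the form required for $U$ to be a linear representation, contradicting the hypothesis that $U$ is non-linear. I do not anticipate a serious obstacle: Theorem~\ref{thm:linearity_thm} does essentially all of the work, and the only mild care needed is in guaranteeing that the extension of $f$ from $\mathrm{Im}(J)$ to all of $\mathbb{R}$ is genuinely strictly monotonic, which is a routine construction.
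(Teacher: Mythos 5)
Your proof is correct and follows essentially the same route as the paper's: assume scalarizability, invoke Theorem~\ref{thm:linearity_thm} to obtain $J = \sum_{i=1}^k w_i J_i$, and observe that $U$ must then be a strictly monotonic function of $J$, contradicting the assumed non-linearity of the representation. The only divergence is your final step of extending $f$ from $\mathrm{Im}(J)$ to all of $\mathbb{R}$, which the paper's definition of a linear representation does not actually require (and which, as literally described, could fail when $U$ is unbounded on $\Pi$ while $\mathrm{Im}(J)$ is bounded, since a strictly increasing function unbounded above on a bounded set admits no strictly increasing extension past that set); it suffices to stop once $f$ is defined on $\mathrm{Im}(J)$.
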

\begin{proof}
Assume for contradiction that $\M$ with $\Ob$ is equivalent the MDP $\MDP$. Then $J$ represents $\Ob(J_1 \dots J_k)$, and this in turn means that $U$ must be strictly monotonic in $J$. Moreover, Theorem~\ref{thm:linearity_thm} implies that $J = \sum_{i=0}^k w_i \cdot J_i$ for some $w_1 \dots w_k \in \mathbb{R}^k$. However, this contradicts our assumptions.
\end{proof}

\begin{corollary}
There is no MDP equivalent to $\M$ with $\textbf{LexMax}$, as long as $\M$ has at least two reward functions that are neither trivial, equivalent, or opposite. 
\end{corollary}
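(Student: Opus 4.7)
The plan is to assume scalarizability and derive a contradiction via Theorem~\ref{thm:linearity_thm}, which provides weights $w_1, \dots, w_k$ such that $J = \sum_{l=1}^k w_l J_l$ represents $\prec_\texttt{Lex}$ on $\Pi$. First I perform a WLOG reduction: trivial $R_l$ have constant $J_l$ and so cannot break any LexMax ties, hence they can be deleted from the list without changing $\prec_\texttt{Lex}$; after relabelling, assume $R_1$ is non-trivial. The hypothesised non-trivial, non-equivalent, non-opposite pair $R_a, R_b$ then yields some $j \geq 2$ such that $R_j$ is itself non-trivial, non-equivalent to $R_1$, and non-opposite to $R_1$: a short transitivity argument through $R_1$ shows that if $R_a$ and $R_b$ were both equivalent or opposite to $R_1$ then they would be equivalent or opposite to each other, contradicting the hypothesis.

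The key geometric ingredient is that, writing $D$ for the direction space of the affine hull $S_\gamma$, the restrictions $\vec{R_1}|_D$ and $\vec{R_j}|_D$ are linearly independent. Otherwise $\vec{R_j}|_D$ would be a scalar multiple of $\vec{R_1}|_D$, which would force $J_j = a J_1 + \mathrm{const}$ on $M = \mathrm{Im}(\m)$ and hence make $R_j$ trivial, equivalent to $R_1$, or opposite to $R_1$, a contradiction. Combined with Lemma~\ref{lemma:open_set} (openness of $M$ in $S_\gamma$), this independence gives enough freedom to perturb policies within $D$.

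Now fix any $m_0 \in M$, choose $v_1 \in D$ with $J_1(v_1) > 0$, and let $v_*$ be an arbitrary element of $\ker(J_1|_D)$. For each $\beta \in \mathbb{R}$ the direction $v_\beta = v_1 + \beta v_*$ still satisfies $J_1(v_\beta) = J_1(v_1) > 0$, so picking $\alpha_\beta > 0$ small enough that $m_0 + \alpha_\beta v_\beta \in M$, the corresponding pair of policies $\pi_0, \pi_\beta$ satisfies $\pi_0 \prec_\texttt{Lex} \pi_\beta$ by the top-priority $J_1$; hence $J(\pi_0) < J(\pi_\beta)$, which rearranges to $\sum_l w_l J_l(v_\beta) > 0$. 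Being affine in $\beta$ and positive for every $\beta \in \mathbb{R}$, the $\beta$-coefficient $\sum_l w_l J_l(v_*)$ must vanish. Varying $v_* \in \ker(J_1|_D)$, this shows the linear functional $\sum_l w_l J_l|_D$ vanishes on the whole hyperplane $\ker(J_1|_D)$, so elementary linear algebra gives $\sum_l w_l J_l|_D = c\, J_1|_D$ for some scalar, with $c > 0$ by evaluating at $v_1$. Translating to $\Pi$ yields $J(\pi) = c\, J_1(\pi) + \mathrm{const}$.

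Consequently $J$ induces the same order on $\Pi$ as $J_1$ alone, so any pair $\pi, \pi'$ with $J_1(\pi) = J_1(\pi')$ has $J(\pi) = J(\pi')$ and must be tied in LexMax, which forces $J_l(\pi) = J_l(\pi')$ for every $l$ --- in particular for $l = j$. Applying the openness trick one more time, this forces $J_j|_D$ to vanish on $\ker(J_1|_D)$, so $J_j|_D = a\, J_1|_D$ for some $a$, and thus $J_j = a J_1 + \mathrm{const}$ on $S_\gamma$; the key geometric ingredient then contradicts the non-triviality, non-equivalence, and non-oppositeness of $R_j$. The main obstacle is that we cannot in general find policies on which $J_l$ coincide for every $l \neq 1, j$ while $J_1$ and $J_j$ vary independently; the family $v_\beta = v_1 + \beta v_*$ is the trick that sidesteps this, since keeping $J_1(v_\beta) > 0$ collapses LexMax to a single inequality on $J$ irrespective of what the other $J_l$ do, and positivity for all $\beta \in \mathbb{R}$ of a linear-in-$\beta$ quantity is precisely what forces the collapse onto $J_1$.
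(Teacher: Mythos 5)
Your proof is correct, but it takes a genuinely different route from the paper's. The paper's argument never invokes Theorem~\ref{thm:linearity_thm}: it instead observes that, since \textbf{LexMax} never strictly prefers a policy with lower $J_i$ (for $i$ the top-priority non-trivial reward), the scalarizing reward $\tilde{R}$ and $R_i$ never ``disagree'', and then cites Theorem~1 of \citet{rewardgaming} to conclude $\tilde{R}$ is equivalent to $R_i$; a pair $\pi_1,\pi_2$ with $J_i(\pi_1)=J_i(\pi_2)$ but $J_j(\pi_1)<J_j(\pi_2)$ (whose existence the paper asserts without proof) then gives the contradiction. You instead start from the linear form $J=\sum_l w_l J_l$ guaranteed by Theorem~\ref{thm:linearity_thm} and work directly in occupancy-measure space: the family $v_\beta=v_1+\beta v_*$ with $v_*\in\ker(\vec{R_1}|_D)$, combined with Lemma~\ref{lemma:open_set}, forces the $\beta$-coefficient of the affine-in-$\beta$ quantity $\sum_l w_l \vec{R_l}\cdot v_\beta$ to vanish, collapsing $J$ onto $c\,J_1$; a second application of the same openness trick collapses $J_j$ onto $J_1$, contradicting the linear independence of $\vec{R_1}|_D$ and $\vec{R_j}|_D$ that you correctly derive from $R_j$ being neither trivial, equivalent, nor opposite to $R_1$. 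Your version buys self-containedness (it uses only this paper's own Theorem~\ref{thm:linearity_thm} and Lemma~\ref{lemma:open_set} rather than an external result) and it actually supplies the justification for the tie-breaking pair that the paper leaves implicit; the paper's version is shorter and avoids the geometric bookkeeping. Two small points worth making explicit if you write this up: your WLOG deletion of trivial rewards should note that a constant $J_l$ can never supply the strict inequality nor break a tie in Definition~\ref{def:lexmax}, so the induced order is unchanged; and the existence of $v_1\in D$ with $\vec{R_1}\cdot v_1>0$ needs the (easy) observation that non-triviality of $R_1$ forces $\vec{R_1}|_D\neq 0$.
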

\begin{proof}

Suppose $\M$ with $\texttt{LexMax}$ is equivalent to $\tilde{\M} = \MDPwO$. Let $i$ be the smallest number such that $R_i$ is non-trivial, and let $j$ be the smallest number greater than $i$ such that $R_j$ is non-trivial, and not equivalent to or opposite of $R_i$. Then there are $\pi_1,\pi_2$ such that $J_i(\pi_1) = J_i(\pi_2)$ and $J_j(\pi_1) < J_j(\pi_2)$, which means that $\pi_1 \prec_\texttt{Lex}^\M \pi_2$.
Moreover, since $\tilde{J}$ represents $\prec_\texttt{Lex}^\M$, it follows that there are no $\pi, \pi'$ such that $J_i(\pi) < J_i(\pi')$ and $\tilde{J}(\pi) > \tilde{J}(\pi')$. Then Theorem 1 in \citet{rewardgaming} implies that $R_i$ is equivalent to $\tilde{R}$. However, then $\tilde{J}(\pi_1) = \tilde{J}(\pi_2)$, which means that $\tilde{J}$ cannot represent $\prec_\texttt{Lex}^\M$.
\end{proof}


\begin{corollary}
There is no MDP equivalent to $\M$ with $\textbf{MaxMin}$, unless $\M$ has a reward function $R_i$ such that $J_i(\pi) \leq J_j(\pi)$ for all $j \in \{1 \dots k\}$ and all $\pi$.
\end{corollary}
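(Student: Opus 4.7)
The plan is to invoke Corollary~\ref{cor:nonlinear_rep} applied to the natural representation $U(\pi) \coloneqq \min_i J_i(\pi)$ of $\prec_\texttt{Min}^\M$, so that it suffices to show that, under the hypothesis, $U$ is not a linear representation: no strictly monotonic $f$ and weights $w_1,\ldots,w_k$ can satisfy $U(\pi) = f(\sum_i w_i J_i(\pi))$ on all of $\Pi$. The hypothesis (no $R_i$ has $J_i(\pi) \leq J_j(\pi)$ for every $j$ and $\pi$) forces at least two distinct indices $a \neq b$ to each realise the minimum at some policy, since otherwise a single index would always attain it.

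Assuming for contradiction that $U = f \circ L$ with $L = \sum_i w_i J_i$, I would use Lemma~\ref{lemma:open_set} (openness of $\mathrm{Im}(\m)$ in $S_\gamma$) to perturb the two witnessing policies into $\tilde\pi^a, \tilde\pi^b$ where $a$, respectively $b$, is the \emph{strict} unique argmin, and then pick open neighborhoods $\mathcal{N}_a, \mathcal{N}_b \subseteq \mathrm{Im}(\m)$ on which this strict domination persists. On $\mathcal{N}_a$ one has $J_a = f(L)$, hence by strict monotonicity of $f$ the level sets of $J_a$ and of $L$ coincide there. Since $\mathcal{N}_a$ is open in $S_\gamma$, I can exhibit pairs of policies whose $J_a$-values coincide but whose $J_j$-values differ in only one coordinate $j \neq a$; this forces $w_j = 0$ for each such $j$. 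The symmetric argument on $\mathcal{N}_b$ forces $w_j = 0$ for every $j \neq b$, and combining with $a \neq b$ yields $w_i = 0$ for every $i$.

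It follows that $L \equiv 0$, so $U \equiv f(0)$ is constant on $\Pi$. To close the argument, I would observe that the segment joining $\m(\tilde\pi^a)$ and $\m(\tilde\pi^b)$ lies in $\mathrm{Im}(\m)$ (occupancy measures are closed under stochastic mixing), and a short linear-interpolation calculation shows that $\min_i J_i$ varies strictly along this segment — a contradiction with $U$ being constant. The only degenerate case is when $J_a(\tilde\pi^a) = J_b(\tilde\pi^b)$ and $J_a, J_b$ happen to agree at both endpoints of the segment, but this can be ruled out by choosing the witnessing policies generically (equivalently, by applying another small perturbation within the open image).

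The main obstacle will be the perturbation step: justifying that within $\mathcal{N}_a$ each $J_j$ (for $j \neq a$) can be varied independently while $J_a$ is held fixed. This reduces to a linear-independence statement about the rows $\vec{R}_1, \ldots, \vec{R}_k$ restricted to $S_\gamma$. When these rows are linearly independent on $S_\gamma$ the required freedom is immediate; otherwise some $\vec{R}_j$ is a combination of the others on $S_\gamma$, and the MOMDP effectively reduces to one with fewer rewards, to which I would apply induction. Making this reduction precise is where the bulk of the bookkeeping lies.
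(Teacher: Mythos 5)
Your proposal follows the paper's proof exactly at the top level: the paper likewise reduces the corollary to Corollary~\ref{cor:nonlinear_rep} applied to the representation $U(\pi)=\min_i J_i(\pi)$, and then simply asserts, without further argument, that this representation is non-linear whenever no single $R_i$ is pointwise dominated by all the others. Everything from your second paragraph onward is detail the paper omits, and it is broadly sound, with one caveat worth fixing if you write it out: the induction you propose for the linearly dependent case does not literally stay inside the \textbf{MaxMin} class (eliminating a reward with $J_j = \sum_{i\neq j} c_i J_i + d$ leaves a minimum over affine functions on $S_\gamma$ that is no longer of the form $\min_i J_i$ for a sub-collection of the original rewards), so that step should instead be formulated as a statement about pointwise minima of arbitrary finite families of affine functions on $S_\gamma$.
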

\begin{proof}
$\Ob_\texttt{Min}^\M$ is represented by 
the function 
$U(\pi) = \mathrm{min}_i J_i(\pi)$. Moreover, if $\M$ has no reward function $R_i$ such that $J_i(\pi) \leq J_j(\pi)$ for all $j \in \{1 \dots k\}$ and all $\pi$ then this representation is non-linear. 
Corollary~\ref{cor:nonlinear_rep} then implies that $\M$ with $\texttt{MaxMin}$ is not equivalent to any MDP. 
\end{proof}

\begin{corollary}
There is no MDP equivalent to $\M$ with $\textbf{MaxSat}$, as long as $\M$ has at least one reward $R_i$ where $J_i(\pi_1) < c_i$ and $J_i(\pi_2) \geq c_i$ for some $\pi_1, \pi_2 \in \Pi$.
\end{corollary}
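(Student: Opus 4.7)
The plan is to apply Corollary~\ref{cor:nonlinear_rep} by exhibiting a non-linear representation of $\prec_\texttt{Sat}^\M$. The natural candidate is the \emph{satisfaction count}
$$
U(\pi) = \sum_{j=1}^k \mathbbm{1}[J_j(\pi) \geq c_j],
$$
which, by the definition of $\texttt{MaxSat}$, satisfies $U(\pi_a) < U(\pi_b) \iff \pi_a \prec_\texttt{Sat} \pi_b$, so $U$ represents $\prec_\texttt{Sat}^\M$.

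Next I would show $U$ is non-linear, i.e., that no weights $w_1,\dots,w_k \in \mathbb{R}$ and no strictly monotonic $f$ satisfy $U(\pi) = f(\sum_{j=1}^k w_j J_j(\pi))$ for all $\pi \in \Pi$. The key observation is that $U$ takes only finitely many values (in $\{0,1,\dots,k\}$), so if $f$ were strictly monotonic (hence injective), the image of $\pi \mapsto \sum_j w_j J_j(\pi)$ would also have to be finite. However, $\sum_j w_j J_j(\pi)$ is a continuous linear function of the occupancy measure $\m(\pi)$, and by Lemma~\ref{lemma:open_set}, $\mathrm{Im}(\m)$ is an open (and convex, hence connected) subset of the affine subspace $S_\gamma$. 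Its image under a continuous linear map is therefore a connected subset of $\mathbb{R}$, i.e., either a singleton or a non-degenerate interval. Finiteness then forces this image to be a singleton, so $\sum_j w_j J_j$ would have to be constant on $\Pi$, which in turn would force $U$ itself to be constant.

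The remaining step is to rule out $U$ being constant, using the hypothesis that there exist $\pi_1, \pi_2 \in \Pi$ with $J_i(\pi_1) < c_i \leq J_i(\pi_2)$. I would interpolate along a family $\{\pi_\lambda\}_{\lambda \in [0,1]}$ with $J_j(\pi_\lambda) = \lambda J_j(\pi_2) + (1-\lambda) J_j(\pi_1)$, which is achievable because the set of $J$-vectors is convex as a linear image of the convex set $\mathrm{Im}(\m)$. Along this family the $i$-th indicator must flip at $\lambda_i^* = (c_i - J_i(\pi_1))/(J_i(\pi_2) - J_i(\pi_1)) \in (0,1]$, while each other indicator can flip at most once in $[0,1]$.

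The main obstacle I anticipate is the edge case where several other indicators happen to flip simultaneously at $\lambda_i^*$ and exactly cancel its contribution, leaving $U$ locally unchanged near $\lambda_i^*$. My plan is to use the openness of $\mathrm{Im}(\m)$ (Lemma~\ref{lemma:open_set}) to slightly perturb $\m(\pi_2)$ to a nearby occupancy measure $\m(\pi_2')$, still satisfying $J_i(\pi_2') \geq c_i$, so that the finitely many flip points $\lambda_j^*$ along the interpolation from $\pi_1$ to $\pi_2'$ are all distinct. At such a generic pair the $i$-th flip is uncompensated, so $U$ changes by $\pm 1$ there, and in particular $U$ is non-constant on $\Pi$. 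Combined with the previous paragraph, this yields the desired non-linearity of $U$, and Corollary~\ref{cor:nonlinear_rep} then delivers that $\M$ with $\texttt{MaxSat}$ is unscalarizable.
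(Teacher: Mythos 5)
Your top-level strategy is exactly the paper's: exhibit the satisfaction count $U(\pi) = \sum_{j=1}^k \mathbbm{1}[J_j(\pi) \geq c_j]$ as a representation of $\prec_\texttt{Sat}^\M$, argue it is non-linear, and invoke Corollary~\ref{cor:nonlinear_rep}. The paper's own proof simply asserts the non-linearity; your finite-image argument (a strictly monotonic $f$ is injective, so $\sum_j w_j J_j$ would have finite image, but as a continuous linear image of the connected set $\mathrm{Im}(\m)$ that image is an interval, hence a single point, forcing $U$ to be constant) is a correct and welcome way to fill that in. You are also right that the constant case must be excluded separately, since a constant $U$ is trivially a linear representation with all $w_j = 0$. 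Note, though, that your interpolation step uses convexity of $\mathrm{Im}(\m)$ (equivalently, of the set of achievable $J$-vectors), which is standard for discounted occupancy measures of stationary policies but is not among the facts the paper states; Lemma~\ref{lemma:open_set} only gives openness.

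The one genuine gap is your fix for the cancellation edge case. Perturbing $\m(\pi_2)$ cannot always make the flip points distinct: if, say, $J_{j'} = -J_j$ and $c_{j'} = -c_j$, then $\mathbbm{1}[J_{j'} \geq c_{j'}] = \mathbbm{1}[J_j \leq c_j]$, and the two flip points coincide along \emph{every} segment, so no perturbation separates them. The repair is simpler than a perturbation and exploits the asymmetry of the weak inequality. Let $\lambda^* \in (0,1]$ be the flip point of the $i$-th indicator, so $J_i(\pi_{\lambda^*}) = c_i$ and the $i$-th indicator jumps from $0$ to $1$ \emph{at} $\lambda^*$ (the superlevel set is closed). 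Any indicator that flips downward at $\lambda^*$ still equals $1$ at $\lambda^*$ itself and only drops on $(\lambda^*, 1]$, so it contributes nothing to the difference $U(\pi_{\lambda^*}) - U(\pi_{\lambda^* - \epsilon})$; upward flips at or near $\lambda^*$ contribute nonnegatively; and all other indicators are locally constant for $\epsilon$ small enough. Hence $U(\pi_{\lambda^*}) \geq U(\pi_{\lambda^* - \epsilon}) + 1$, so $U$ is non-constant with no genericity assumption. With that substitution your argument goes through and matches the paper's conclusion.
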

\begin{proof}
Note that $\texttt{MaxSat}(\M)$ is represented by the function $U(\pi) = \sum_{i=1}^k \mathbbm{1}[J_i(\pi) \geq c_i]$, where $\mathbbm{1}[J_i(\pi) \geq c_i]$ is the function that is equal to $1$ when $J_i(\pi) \geq c_i$, and $0$ otherwise. 
Moreover,
$U$ is not strictly monotonic in any function that is linear in $J_1 \dots J_k$. Corollary \ref{cor:nonlinear_rep} thus implies that $\M$ with $\texttt{MaxSat}$ is not equivalent to any MDP.
\end{proof}

\begin{corollary}
There is no MDP equivalent to $\M$ with $\textbf{ConSat}$, unless either $R_1$ and $R_2$ are equivalent, or $\max_{\pi}J_1(\pi) \leq c$.
\end{corollary}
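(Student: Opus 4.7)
The plan is to derive a contradiction from Theorem~\ref{thm:linearity_thm}. Suppose $R_1$ and $R_2$ are not equivalent and $\max_\pi J_1(\pi) > c$; one may further assume $\min_\pi J_1(\pi) < c$, since otherwise every policy satisfies the constraint, $\prec_\texttt{Con}^\M$ reduces to ordering by $J_2$, and $R_2$ scalarizes $\M$ with $\texttt{ConSat}$. Write $\Pi^+ = \{\pi : J_1(\pi) \geq c\}$ and $\Pi^- = \{\pi : J_1(\pi) < c\}$, both non-empty. The restriction of $\prec_\texttt{Con}^\M$ to $\Pi^+$ orders policies by $J_2$, its restriction to $\Pi^-$ orders them by $J_1$, and every policy in $\Pi^+$ is strictly preferred to every policy in $\Pi^-$. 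If $\M$ with $\texttt{ConSat}$ were scalarized by some $\tilde R$, Theorem~\ref{thm:linearity_thm} would yield $w_1, w_2 \in \mathbb{R}$ with $\tilde J = w_1 J_1 + w_2 J_2$, and I will show that no such pair reproduces both restrictions simultaneously.

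The argument splits on whether $R_1$ and $R_2$ are opposite. In the generic case where $R_1$ and $R_2$ are neither equivalent nor opposite, the image of $\pi \mapsto (J_1(\pi), J_2(\pi))$ has non-empty interior in $\mathbb{R}^2$. I then pick $\pi_1, \pi_2 \in \Pi^+$ with $J_2(\pi_1) = J_2(\pi_2)$ and $J_1(\pi_1) \neq J_1(\pi_2)$; since $\pi_1 \sim_\texttt{Con} \pi_2$, the equality $\tilde J(\pi_1) = \tilde J(\pi_2)$ forces $w_1 = 0$. With $w_1 = 0$, the combination $\tilde J = w_2 J_2$ depends only on $J_2$, so a second pair $\pi_3, \pi_4 \in \Pi^-$ with $J_1(\pi_3) < J_1(\pi_4)$ and $J_2(\pi_3) = J_2(\pi_4)$ satisfies $\pi_3 \prec_\texttt{Con} \pi_4$ while $\tilde J(\pi_3) = \tilde J(\pi_4)$, contradicting that $\tilde J$ represents $\prec_\texttt{Con}^\M$. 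In the remaining case where $R_1$ and $R_2$ are opposite, there are constants $\alpha > 0$ and $\beta$ with $J_2 = -\alpha J_1 + \beta$, so $\tilde J = (w_1 - \alpha w_2) J_1 + w_2 \beta$ is affine, hence monotone, in $J_1$. But $\prec_\texttt{Con}^\M$ is not monotone in $J_1$: above threshold, larger $J_1$ means smaller $J_2$ and so worse ConSat rank, whereas below threshold larger $J_1$ is strictly better. Exhibiting a strictly-ordered pair in $\Pi^+$ with unequal $J_1$ together with another in $\Pi^-$ with unequal $J_1$ closes this case.

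The main obstacle is securing the policy pairs used in the generic case. My route is to invoke Lemma~\ref{lemma:open_set}: $\mathrm{Im}(\m)$ is open in the affine subspace $S_\gamma$, and the linear map $x \mapsto (\vec R_1 \cdot x, \vec R_2 \cdot x)$ has rank two on $S_\gamma$ precisely when $R_1$ and $R_2$ are neither equivalent nor opposite (as follows by combining Theorem~1 of \citet{rewardgaming} with the fact that equivalence corresponds to $J_1$ being an affine, strictly increasing function of $J_2$). Hence the image of this map in $\mathbb{R}^2$ is open, and under the hypotheses $\max_\pi J_1(\pi) > c$ and $\min_\pi J_1(\pi) < c$ it meets both open half-planes $\{J_1 > c\}$ and $\{J_1 < c\}$; selecting any small open box inside either intersection then supplies the required pairs with independent variation in $J_1$ and $J_2$.
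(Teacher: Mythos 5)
Your proof is correct in substance but takes a genuinely different route from the paper's. The paper exhibits an explicit utility representation, $U(\pi) = J_1(\pi)$ if $J_1(\pi) \leq c$ and $J_2(\pi) - \min_\pi J_2(\pi) + c$ otherwise, asserts that this representation is non-linear outside the exceptional cases, and invokes Corollary~\ref{cor:nonlinear_rep}. You instead apply Theorem~\ref{thm:linearity_thm} directly and refute every candidate $w_1 J_1 + w_2 J_2$ by producing witness pairs inside $\Pi^+$ and $\Pi^-$. Your version is more self-contained and more careful on two points the paper glosses over: it actually verifies that no linear combination can represent the order (the paper's non-linearity claim is left unproved), and it flags the case $\min_\pi J_1(\pi) \geq c$, in which every policy satisfies the constraint and $R_2$ does scalarize the problem --- an exception that appears in the main-text statement of this corollary but was dropped from the appendix restatement you were asked to prove. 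The price you pay is needing openness of the image of $\pi \mapsto (J_1(\pi), J_2(\pi))$ via Lemma~\ref{lemma:open_set}, which the paper's route sidesteps.

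One step needs patching. You claim that the map $x \mapsto (\vec{R}_1 \cdot x, \vec{R}_2 \cdot x)$ has rank two on $S_\gamma$ \emph{precisely} when $R_1$ and $R_2$ are neither equivalent nor opposite. This fails when $R_2$ is trivial and $R_1$ is not: the rank is then one, yet the two rewards are neither equivalent nor opposite under the paper's definitions, so this situation falls into your ``generic case'' while the open-box construction is unavailable. The conclusion still holds --- with $J_2$ constant, any two policies in $\Pi^+$ with distinct $J_1$ (which exist, since the achievable $J_1$-values form an interval straddling $c$) are ConSat-tied and force $w_1 = 0$, after which $\tilde{J}$ is globally constant and cannot represent the strict $J_1$-order on $\Pi^-$ --- but this should be carved out as a separate sub-case rather than folded into the rank-two claim. (The symmetric case of $R_1$ trivial is harmless, since a constant $J_1$ always triggers one of the stated threshold exceptions.)
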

\begin{proof}
$\Ob_\texttt{Con}^\M$ is represented by
$U(\pi) = \{J_1(\pi) \text{ if } J_1(\pi) \leq c \text{, else } J_2(\pi) - \min_\pi J_2(\pi) + c\}$. Moreover, this representation is non-linear, unless either $R_1$ and $R_2$ are equivalent, or $\max_{\pi}J_1(\pi) \leq c$.
Corollary~\ref{cor:nonlinear_rep} then implies that $\M$ with $\texttt{ConSat}$ is not equivalent to any MDP. 
\end{proof}

We next give the proof of Theorem~\ref{thm:risk_theorem}, from Section~\ref{section:risk_sensitive_rl}.

\begin{theorem}
Given $\States$, $\Actions$, and $\gamma$, 
let $R_1$ and $R_2$ be two reward functions.
If for all $\xi_1,\xi_2 \in (\SxA)^\omega$ and $\gamma \geq 0.5$, 
$$
G_1(\xi_1) \leq G_1(\xi_2) \iff G_2(\xi_1) \leq G_2(\xi_2),
$$
then there exist $a \in \mathbb{R}$, $b \in \mathbb{R} > 0$ such that for all $\xi \in (\SxA)^\omega$,
$$
G_1(\xi) = b \cdot G_2(\xi) + a.
$$
\end{theorem}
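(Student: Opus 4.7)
The plan is to convert the bi-implication into a functional equation for the strictly increasing $f$ with $G_1 = f\circ G_2$, and then to leverage $\gamma \geq 1/2$ twice -- once to show the domain of $f$ is an interval and once to run a contraction argument -- in order to force $f$ to be affine. As a preliminary, if $R_2$ is constant then $G_2$ is constant, so the bi-implication forces $G_1$ constant, $R_1$ constant, and any $b>0$ works; henceforth assume $R_2$ is non-trivial. Since $(s,a)\cdot\xi$ is a valid trajectory for any fixed transition $(s,a)\in\SxA$ and any $\xi$, and satisfies $G_i((s,a)\cdot\xi) = R_i(s,a) + \gamma G_i(\xi)$, substituting into $G_1 = f\circ G_2$ yields the functional equation
\[
f\bigl(R_2(s,a) + \gamma u\bigr) = R_1(s,a) + \gamma f(u), \qquad u \in D := \mathrm{Im}(G_2),
\]
valid for every transition $(s,a)$.

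Next I would pin down $D$ and focus on the two extremal instances of this equation. Let $p_{\min}= \inf R_2$ and $p_{\max} = \sup R_2$, and let $q_{\min}, q_{\max}$ be the corresponding values of $R_1$ (uniquely determined by the functional equation). Trajectories using only the two extremal transitions realise the values $p_{\min}/(1-\gamma) + (p_{\max}-p_{\min})\sum_t \gamma^t \delta_t$ with $\delta_t \in \{0,1\}$. For $\gamma \geq 1/2$ a standard base-$\gamma$ expansion argument shows that $\sum_t \gamma^t \delta_t$ ranges over the full interval $[0,\,1/(1-\gamma)]$, so $D = [p_{\min}/(1-\gamma),\, p_{\max}/(1-\gamma)]$. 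Writing $A(u) := p_{\min} + \gamma u$ and $B(u) := p_{\max} + \gamma u$, the equation specialises to
\[
f(A(u)) = q_{\min} + \gamma f(u), \qquad f(B(u)) = q_{\max} + \gamma f(u).
\]

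The third step is to subtract off a candidate affine solution. Asking for $\alpha, \beta \in \mathbb R$ making $u\mapsto \alpha u + \beta$ satisfy both relations gives the linear system $\alpha p_{\min} + (1-\gamma)\beta = q_{\min}$, $\alpha p_{\max} + (1-\gamma)\beta = q_{\max}$, with unique solution $\alpha = (q_{\max}-q_{\min})/(p_{\max}-p_{\min})$ together with a matching $\beta$. Setting $h(u) := f(u) - \alpha u - \beta$, a direct substitution yields the homogeneous identities $h(A(u)) = \gamma h(u)$ and $h(B(u)) = \gamma h(u)$ on $D$.

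Finally I would conclude via a coverage-and-contraction argument. Since $f$ is monotone on the compact interval $D$, $h$ is bounded. An elementary computation shows $A(D)\cup B(D) = D$ precisely when $\gamma \geq 1/2$ -- this is the second, and more essential, use of the hypothesis. Every $u \in D$ therefore lies in $A(D)$ or $B(D)$, giving $|h(u)| \leq \gamma\sup_D|h|$ and hence $\sup_D|h| \leq \gamma\sup_D|h|$; as $\gamma<1$ this forces $h\equiv 0$. Thus $f(u) = \alpha u + \beta$ on $D$, so $G_1 = \alpha G_2 + \beta$, and $\alpha>0$ follows from strict monotonicity; taking $b = \alpha$, $a = \beta$ completes the argument. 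The step I expect to need the most care -- and the sole obstacle to removing the hypothesis $\gamma \geq 1/2$ -- is the coverage identity $A(D)\cup B(D) = D$: for $\gamma<1/2$ the two contractions have disjoint images, their attractor becomes a Cantor-type set, and one can construct non-affine strictly increasing $f$ satisfying both relations on that attractor.
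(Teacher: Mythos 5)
Your proof is correct, but it takes a genuinely different route from the paper's. The paper lifts trajectories into $\mathbb{R}^n$ via the discounted visit-count embedding $m(\xi)_i=\sum_t\gamma^t\mathbbm{1}[\xi_t=T_i]$, so that $G_1$ and $G_2$ become linear functionals $\vec{R_1}\cdot m(\xi)$ and $\vec{R_2}\cdot m(\xi)$; it then uses the base-$1/\gamma$ expansion (this is where $\gamma\geq 0.5$ enters) to show that every point on every edge of the simplex $\{x\geq 0:\|x\|_1=1/(1-\gamma)\}$ lies in $\mathrm{Im}(m)$, and concludes geometrically that the two functionals must share level sets on $\mathrm{Im}(m)$ and hence be affinely related. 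You instead collapse everything to one dimension: the prefix identity $G_i((s,a)\cdot\xi)=R_i(s,a)+\gamma G_i(\xi)$ turns the monotone conjugacy $G_1=f\circ G_2$ into a self-similarity functional equation for $f$ on $D=\mathrm{Im}(G_2)$, and $\gamma\geq 1/2$ is used twice -- once (via the same binary-expansion trick as the paper) to show $D$ is a full interval with attained endpoints, so $f$ is bounded, and once to get the coverage $A(D)\cup B(D)=D$ that drives the contraction $\sup_D|h|\leq\gamma\sup_D|h|$ killing the non-affine part. The paper's argument buys a clean conceptual picture (affine dependence of linear functionals with common level sets) that parallels its Theorem~1, while yours is more elementary -- no tangent-plane or linear-independence reasoning in $\mathbb{R}^n$ -- and makes the role of the hypothesis $\gamma\geq 1/2$ completely transparent, including why the argument degenerates to a Cantor-type attractor below $1/2$; your closing remark on that point is a sharper diagnosis than the paper's own comment that the necessity of the assumption is unclear, though constructing an actual counterexample would still require handling all transitions, not just the two extremal ones.
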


\begin{proof}
We can first note that if $G_1$ is constant then $G_2$ must also be constant, and vice versa, in which case this result is straightforward (with $b = 1$, $a = G_1 - G_2$). For the rest of the proof, assume that neither $G_1$ or $G_2$ is constant.

For convenience, let $n = |S||A|$, let $T = \SxA$ , and let each transition in $\SxA$ be indexed by an integer $i \in [1,n]$. Let $\Vec{R_1} \in \mathbb{R}^n$ be the vector such that $\Vec{R_1}_i = R_1(T_i)$, and $\Vec{R_2} \in \mathbb{R}^n$ be the vector such that $\Vec{R_2}_i = R_2(T_i)$. Moreover, let $m : T \to \mathbb{R}^n$ be the function where 
$$
m(\xi)_i = \sum_{j=0}^\infty \delta^j \mathbbm{1}[\xi_j = T_i].
$$
Now $G_1(\xi) = \vec{R_1} \cdot m(\xi)$ and $G_2(\xi) = \vec{R_2} \cdot m(\xi)$. In other words, this construction lets us decompose $G_1$ and $G_2$ into two steps, the first of which embeds $\xi$ in $\mathbb{R}^n$, and the second of which is a linear function.

Next, let us consider what $\mathrm{Im}(m)$ looks like. First, note that $m(\xi)_i \geq 0$ for all $i$ and all $\xi$. Next, note that $\sum m(\xi) = 1/(1-\gamma)$ for all $\xi$. This means that $\mathrm{Im}(m)$ is located inside the simplex that is formed by all points in the positive quadrant of $\mathbb{R}^n$ whose $L_1$-norm is $1/(1-\gamma)$.

Consider two arbitrary transitions $t_i, t_j \in T$. Note that $m(t_i^\omega)$ is the point where the aforementioned simplex intersects the $i$'th basis vector of $\mathbb{R}^n$, and similarly for $m(t_j^\omega)$. Moreover, if $\xi$ is made up entirely from $t_i$ and $t_j$ in some combination and order (i.e., $\xi \in \{t_i,t_j\}^\omega \subseteq T$), then $m(\xi)$ is on the line between $m(t_i^\omega)$ and $m(t_j^\omega)$. 

Let $\alpha$ be any number in $[0, 1/(1-\gamma)]$. Since $1/\gamma > 1$, there is a representation of $\alpha$ in base $1/\gamma$.
This means that there is an integer $u$ and a sequence of integers $\{a_k\}_{k \in (-\infty, u]}$ such that
$$
\sum_{k = u}^{-\infty} a_k \cdot (1/\gamma)^k = \alpha
$$
where each $a_k$ is a nonnegative integer less than $1/\gamma$. Since $\gamma \geq 0.5$, this means that each $a_k$ is 0 or 1. Moreover, since $\alpha \leq 1/(1-\gamma)$, we have that $u \leq 0$. By rewriting using $k' = -k$, this means that there is a sequence $\{a_{k'}\}_{k' \in [0,\infty)}$ where each $a_{k'} \in \{0,1\}$ such that
$$
\sum_{k' = 0}^{\infty} a_{k'} \cdot \gamma^{k'} = \alpha.
$$
Let $\xi \in T$ be the trajectory where $\xi_{k'} = t_i$ if $a_{k'} = 1$, and $t_j$ if $a_{k'} = 0$. We now have that $m(\xi) = \alpha/(1/(1-\gamma)) \cdot m(t_i^\omega) + (1-\alpha/(1/(1-\gamma))) \cdot m(t_j^\omega)$. Since $\alpha$ was chosen arbitrarily from $[0, 1/(1-\gamma)]$, this means that every point on the line between $m(t_i^\omega)$ and $m(t_j^\omega)$ are in $\mathrm{Im}(m)$. Since $t_i$ and $t_j$ were also chosen arbitrarily, this holds for any $t_i$ and $t_j$ in $T$.

Consider again the simplex that is formed by all points in the positive quadrant of $\mathbb{R}^n$ whose $L_1$-norm is $1/(1-\gamma)$. We have just shown that every point on the edges (1-faces) of this simplex are in $\mathrm{Im}(m)$.

Consider the linear functions that $\vec{R_1}$ and $\vec{R_2}$ induce on $\mathbb{R}^n$. Take the point $x$ at the centre of the simplex, and consider the tangent plane of $\vec{R_1}$ at this point. Since every point on any of the simplex edges are in $\mathrm{Im}(m)$, we have that this tangent plane must intersect $\mathrm{Im}(m)$ at $n-1$ linearly independent points. Since $\vec{R_1}\cdot x_1 = \vec{R_1}\cdot x_2$ implies that $\vec{R_2}\cdot x_1 = \vec{R_2}\cdot x_2$ for all $x_1,x_2 \in \mathrm{Im}(m)$, we have that the tangent plane of $\vec{R_2}$ at $x$ must intersect $\mathrm{Im}(m)$ at the same points. This implies that there are $a, b \in \mathbb{R}$ such that $G_1 = b \cdot G_2 + a$. Since moreover $G_1(\xi_1) \leq G_1(\xi_2) \iff G_2(\xi_1) \leq G_2(\xi_2)$, we have that $b > 0$.
\end{proof}

\begin{theorem}
For any modal reward $R^\Diamond$ and any transition function $\tau$, there exists a reward $R$ that is contingently equivalent to $R^\Diamond$ given $\tau$. Moreover, unless $R^\Diamond$ is trivial, there is no reward that is robustly equivalent to $R^\Diamond$.
\end{theorem}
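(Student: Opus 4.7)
The plan is to observe that both halves of the theorem reduce directly to unpacking the Section~\ref{section:modal} definitions, so the proof is essentially bookkeeping. (I read the appendix's "trivial" as a typographical slip for "vacuous", as in the main text; the literal reading using the Section~\ref{section:preliminaries} definition of trivial would fail, because any ordinary reward function, regarded as a $\tau$-independent modal reward, would be a non-vacuous counterexample that is still robustly equivalent to itself.)

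For the first clause I would simply exhibit $R(s,a) := R^\Diamond_\tau(s,a) = R^\Diamond(s,a,\tau)$. In the MDP whose transition function is $\tau$, the per-step modal reward $R^\Diamond(s,a,\tau)$ equals $R(s,a)$ at every transition by construction, so the Bellman recursions for $V^\pi$ and for $V^{\Diamond,\pi}$ coincide; hence $V = V^\Diamond$, $Q = Q^\Diamond$, and in particular $J = J^\Diamond$. Equal evaluation functions induce equal policy orderings, so $R$ is contingently equivalent to $R^\Diamond$ given $\tau$.

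For the second clause I would argue by contrapositive. Suppose some ordinary reward $R$ is robustly equivalent to $R^\Diamond$. By the definition of robust equivalence, for every transition function $\tau$ the policy orderings induced by $J$ and by $J^\Diamond$ under $\tau$ agree. But by the same observation used in the first clause, the modal evaluation function $J^\Diamond$ under $\tau$ is precisely the evaluation function of the ordinary reward $R^\Diamond_\tau$ under $\tau$. Hence $R$ and $R^\Diamond_\tau$ share a policy ordering under $\tau$ for every $\tau$, which is exactly the definition of $R^\Diamond$ being vacuous. The converse direction is immediate: if $R^\Diamond$ is vacuous, the reward $R$ witnessing vacuity is, by definition, robustly equivalent to $R^\Diamond$.

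The main, rather mild, obstacle is purely notational: one must carefully distinguish the modal evaluation function $J^\Diamond$ from the evaluation function of the ordinary reward $R^\Diamond_\tau$ obtained by freezing $\tau$, then observe that under any fixed transition function these are the same object. Once that identification is made explicit, the theorem collapses into the definitions and no further machinery is required.
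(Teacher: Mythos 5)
Your proof is correct and follows essentially the same route as the paper's (which simply sets $R = R^\Diamond_\tau$ for the first clause and notes the second is immediate from the definition of vacuity); you merely spell out the definitional unpacking that the paper leaves implicit. Your reading of the appendix's \enquote{trivial} as a slip for \enquote{vacuous} is also the right one, matching the statement in the main text.
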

\begin{proof}
This is straightforward.
For the first part, simply let $R(s,a,s') = R^\Diamond(s,a,s',\tau)$. 
The second part is immediate from the definition of trivial modal reward functions.
\end{proof}

\section{Tasks as Optimal Policies}\label{appendix:towards_nas}

In this paper, we primarily think of a \enquote{task} as corresponding to a policy ordering.
An alternative way to formalise the notion of a task is as a set of optimal policies. It is fairly straightforward to provide necessary and sufficient conditions for when this type of task can be expressed using a scalar, Markovian reward function.

\begin{proposition}
A set of policies $\hat{\Pi}$ is the optimal policy set for some reward if and only if there is a function $o : \mcS \to \mathcal{P}(\mcA) \setminus \varnothing$ that maps each state to a (non-empty) set of \enquote{optimal actions}, and $\pi \in \hat{\Pi}$ if and only if $\mathrm{supp}(\pi(s)) \subseteq o(s)$.
\end{proposition}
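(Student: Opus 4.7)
The proof splits cleanly into the two directions of the biconditional, and each direction reduces to standard MDP machinery.

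For the forward direction ($\Rightarrow$), suppose $\hat{\Pi}$ is the optimal policy set of some reward $R$ in an MDP $\MDP$. The plan is to extract $o$ from the optimal $Q$-function: set $o(s) = \arg\max_a Q^\star(s,a)$, which is non-empty whenever $\mcA$ is finite. I would then verify the two standard claims. (i) If $\mathrm{supp}(\pi(s)) \subseteq o(s)$ at every $s$, then plugging $V^\star$ into the Bellman equation for $V^\pi$ shows it is a fixed point, hence $V^\pi = V^\star$ and $\pi$ is optimal. (ii) Conversely, if $\pi$ is optimal then $V^\pi(s) = V^\star(s)$ on $\mathrm{supp}(\init)$; combined with the identity $V^\star(s) = \mathbb{E}_{a \sim \pi(s)}[Q^\star(s,a)]$ and the inequality $Q^\star(s,a) \leq V^\star(s)$ (with equality only for $a \in o(s)$), this forces $\mathrm{supp}(\pi(s)) \subseteq o(s)$ at every reachable $s$, and the property then propagates to all reachable states by induction along trajectories.

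For the reverse direction ($\Leftarrow$), given a function $o : \mcS \to \mathcal{P}(\mcA) \setminus \varnothing$, the plan is to exhibit an explicit reward realising $o$, namely $R(s,a) = \mathbbm{1}[a \in o(s)]$. Under this reward every trajectory return satisfies $G(\xi) \leq \sum_{t=0}^\infty \gamma^t = 1/(1-\gamma)$, with equality iff $a_t \in o(s_t)$ for all $t$. Hence $J(\pi) = 1/(1-\gamma)$ iff $\mathrm{supp}(\pi(s)) \subseteq o(s)$ at every $s$ reachable under $\pi$ from $\init$; since $o(s)$ is non-empty for every $s$, at least one stationary policy attains this upper bound, so it is in fact the optimal value of $J$. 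The optimal policy set for this $R$ is then exactly $\{\pi : \mathrm{supp}(\pi(s)) \subseteq o(s) \text{ for all } s\}$.

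The only real subtlety---and the point I would foreground in the write-up---concerns states unreachable from $\init$. Because the paper defines optimality purely through $J$, a policy's behaviour on such states is invisible to $J$, so the biconditional as stated is strictly correct only under a mild additional convention, for instance that $\init$ has full support, or (equivalently for this purpose) that one adopts the stronger notion of optimality $V^\pi(s) = V^\star(s)$ for every $s$. Under either convention the arguments above go through verbatim. Beyond this book-keeping I do not anticipate any technical obstacle: the proof is essentially a repackaging of the classical Bellman-optimality characterisation of greedy stationary policies.
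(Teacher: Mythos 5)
Your proof takes essentially the same route as the paper's: the forward direction extracts $o(s) = \mathrm{argmax}_a Q^\star(s,a)$ from the optimal $Q$-function, and the reverse direction uses an indicator-style reward (your $\mathbbm{1}[a \in o(s)]$ is the paper's $0$-versus-$-1$ reward shifted by a constant). Your caveat about states unreachable from $\mathrm{supp}(\init)$ is a genuine point that the paper's one-line proof silently ignores: without full support of $\init$ (or a state-wise notion of optimality) the \enquote{only if} direction can fail, since the optimal policy set need not factor as a product over states when a policy can render a state unreachable and thereby escape the greedy constraint there.
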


\begin{proof}
For the \enquote{if} part, consider the reward function $R$ where $R(s,a,s') = 0$ if $a \in o(s)$, and $R(s,a,s') = -1$ otherwise. 
The \enquote{only if} part follows from the fact that the optimal $Q$-function $Q^\star$ is the same for all optimal policies, 
so we can let $o(s) = \mathrm{argmax}_a Q^\star(s,a)$.
\end{proof}

We can see that some tasks of this form cannot be expressed by Markovian rewards. For example, consider the task \enquote{always go in the same direction} --- this task cannot be expressed as a reward function, because any policy that mixes the actions of two other optimal policies must itself be optimal. It also shows that Markovian reward functions cannot be used to encourage \emph{stochastic} policies. For example, there is no Markovian reward function under which \enquote{play rock, paper, and scissors with equal probability} is the unique optimal policy.

\section{More MORL Objectives}\label{appendix:morl_objectives}

In this Appendix, we give even more examples of MORL objectives, and some comments on how to construct them -- the purpose of this is mainly just to show how rich this space is. First, similar to the MaxMin objective, we might want to judge a policy according to its \emph{best} performance:

\begin{definition}\label{def:maxmax}
Given $J_1 \dots J_k$, the \textbf{MaxMax} objective $\prec_\texttt{Max}$ is given by $\pi_1 \prec_\texttt{Max} \pi_2 \iff \max_i J_i(\pi_1) < \max_i J_i(\pi_2)$.
\end{definition}


We would next like to point out that it is possible to create smooth versions of almost any MORL objective. In Section~\ref{section:solving_inexpressible}, we outline an approach for learning any continuous, differentiable MORL objective, so this is quite useful. We begin with a soft version of the MaxMax objective:

\begin{definition}\label{def:maxsoft}
Given $J_1 \dots J_k$ and $\alpha > 0$, the \textbf{Soft MaxMax} objective $\prec_\texttt{MaxSoft}$ is given by 
$$
J_\texttt{MaxSoft}(\pi) = \left(\sum_{i=1}^k J_i(\pi) e^{\alpha J_i(\pi)}\middle)\middle/\middle(\sum_{i=1}^k e^{\alpha J_i(\pi)}\right).
$$
\end{definition}

This is of course not the only way to continuously approximate MaxMax, it is just an example of one way of doing it. Here $\alpha$ controls how \enquote{sharp} the approximation is -- the larger $\alpha$ is, the closer $J_\texttt{MaxSoft}$ gets to the sharp max function, and the smaller $\alpha$ is, the closer it gets to the arithmetic mean function (so by varying $\alpha$, we can continuously interpolate between them). Similarly, we can also create a smooth version of MaxMin:

\begin{definition}\label{def:minsoft}
Given $J_1 \dots J_k$ and $\alpha > 0$, the \textbf{Soft MaxMin} objective $\prec_\texttt{MinSoft}$ is given by 
$$
J_\texttt{MinSoft}(\pi) = \left(\sum_{i=1}^k J_i(\pi) e^{-\alpha J_i(\pi)}\middle)\middle/\middle(\sum_{i=1}^k e^{-\alpha J_i(\pi)}\right).
$$
\end{definition}

As before, the larger $\alpha$ is, the closer $J_\texttt{MinSoft}$ gets to the sharp min function, and the smaller $\alpha$ is, the closer it gets to the arithmetic mean function
We can also smoothen MaxSat:

\begin{definition}\label{def:satsoft}
Given $J_1 \dots J_k$, $c_1 \dots c_k$, and $\alpha > 0$, the \textbf{Soft MaxSat} objective $\prec_\texttt{SatSoft}$ is 
$$
J_\texttt{SatSoft}(\pi) = \sum_{i=1}^k \left(\frac{1}{1 + e^{-\alpha(J_i(\pi) - c_i)}}\right).
$$
\end{definition}

The larger $\alpha$ is, the closer $J_\texttt{SatSoft}$ gets to the sharp MaxSat function (and the smaller $\alpha$ gets, the closer $J_\texttt{SatSoft}$ gets to a flat $0.5$). And, again, this is of course not the only way to create a smooth version of MaxSat.
It is unclear if it is possible to create a smooth version of ConSat without having any prior knowledge of (a lower bound of) the value of $\min_\pi J_1(\pi)$, but with this value it should be reasonably straightforward (see the construction in Corollary~\ref{corollary:no_consat}). As for LexMax, we can of course create a smooth approximation of it by taking a linear approximation of the weights, but here we would need some prior knowledge of $\max_\pi J_1(\pi) \dots \max_\pi J_k(\pi)$.

\section{A Method for Solving Modal Tasks}\label{appendix:solve_modal}

In this Appendix, we give an outline of one possible method for solving modal tasks.
We mainly want to show that it is \emph{feasible} to learn modal tasks, and so we only provide a solution sketch; the task of \emph{implementing} and \emph{evaluating} this method is something we leave as a topic for future work.

We will first define a restricted class of modal tasks, which is both very expressive, and also more amenable to learning than the more general version given in Definition~\ref{definition:modal_reward}:

\begin{definition}\label{def:affordance_mdp}
An \emph{affordance} consists of a reward function and a discount factor, $\langle R,\gamma \rangle$, and an \emph{affordance-based reward} is a function $R^\Diamond : \SxAxS \times \mathbb{R}^{2k} \to \mathbb{R}$, that is continuous in the last $2k$ arguments. 
An \emph{affordance-based MDP} is a tuple $\langle \mcS, \mcA, \tau, \mu_0, R^\Diamond, \gamma, \langle R,\gamma \rangle^k \rangle$, where the reward given for transitioning from $s$ to $s'$ via $a$ is $R^\Diamond(s,a,s',V_1^\star(s) \dots V_k^\star(s), V_1^\star(s') \dots V_k^\star(s'))$, where $V_i^\star$ is the optimal value function of the $i$'th affordance.
\end{definition}

This definition requires some explanation. 
In psychology (and other fields, such as user interface design), an affordance is, roughly, a perceived possible action, or a perceived way to use an object.
For example, if you see a button, then the fact that you can \emph{press} that button, and expect something to happen, is part of \emph{how you perceive} it, in a way that might not be the case if you could somehow show the button to a premodern human.
It can also be used to refer to a choice or action that is perceived as available in some context (without being tied to an object). Here, we are using it to refer to a \emph{task} that could be performed in an MDP. The intuition is that $R^\Diamond$ is allowed to depend on what \emph{could be done} from $s$ and $s'$, in addition to the state features of $s$ and $s'$.

Before outlining an algorithm, let us first give a few examples of how to formalise modal tasks within this framework. First consider the instruction \enquote{you should always be able to return to the start state}. We can formalise this using a reward function $R_1$ that gives $1$ reward if the start state is entered, and $0$ otherwise, and pair it up with a discount parameter $\gamma$ that is very close to $1$. We could then set $R^\Diamond$ to, for example, $R^\Diamond(s,a,s',V_1^\star(s), V_1^\star(s')) = R(s,a,s') \cdot \tanh(V_1^\star(s'))$, where $R$ describes some base task. In this way, no reward is given if the start state cannot be reached from $s'$. Next, consider the instruction \enquote{never enter a state from which it is possible to quickly enter an unsafe state}.
To formalise this, let $R_1$ give $1$ reward if an unsafe state is entered, and $0$ otherwise, and let $\gamma$ correspond to a very high discount rate (e.g.\ $0.7$). 
We could then set $R^\Diamond$ to, for example, $R^\Diamond(s,a,s',V_1^\star(s), V_1^\star(s')) = R(s,a,s') - V_1^\star(s')$, where $R$ again describes some base task.

These examples show that our \enquote{affordance-based} MDPs are quite flexible, and that they should be able to formalise many natural modal tasks in a satisfactory way, including most of our motivating examples.\footnote{This arguably excludes \enquote{you should never enter a state where you would be unable to receive a feedback signal}. However, this instruction only makes sense in a multi-agent setting.} However, the definition could of course be made more general. For example, we could allow the affordances to themselves be based on affordance-based reward functions, etc. However, it is not clear if this would bring much benefit in practice. 

Let us now outline an approach for solving affordance-based MDPs using reinforcement learning, specifically using an action-value method. 
First, let the agent maintain $k+1$ $Q$-functions, $Q^\Diamond, Q_1, \dots, Q_k$, one for $R^\Diamond$ and one for each affordance $\langle R_i, \gamma_i \rangle$. Next, we suppose that the agent updates each of $Q_1, \dots, Q_k$ using an off-policy update rule, such as $Q$-learning; this will ensure that $Q_1, \dots, Q_k$ converge to their true values (i.e.\ to $Q_1^\star \dots Q_k^\star$), as long as the agent explores infinitely often. Note that the use of an off-policy update rule is crucial. 
Next, let the agent update $Q^\Diamond$ as if it were an ordinary Markovian reward function, using the reward $\hat{R}(s,a,s') = R^\Diamond(s,a,s',V_1(s) \dots V_k(s), V_1(s') \dots V_k(s'))$, where $V_i(s)$ is given by $\max_a Q_i(s,a)$.
In other words, we let it update $Q^\Diamond$ using an \emph{estimate} of the true value of $R^\Diamond$, expressed in terms of its current estimates of $V_1^\star \dots V_k^\star$. 
The fact that $Q_1, \dots, Q_k$ converge to $Q_1^\star, \dots, Q_k^\star$, and the fact that $R^\Diamond$ is continuous in its value function arguments, will ensure that the estimate $\hat{R}$ also converges to the true value of $R^\Diamond$.
The update rule used for $Q^\Diamond$ could be either on-policy or off-policy.
We then suppose that the agent selects its actions by applying a Bandit algorithm to $Q^\Diamond$, and that this Bandit algorithm is greedy in the limit, but also explores infinitely often, as usual.

This algorithm should be able to learn to optimise the reward in any affordance-based MDP. 
In the tabular case, it should be possible (and reasonably straightforward) to prove that it always converges to an optimal policy (assuming that appropriate learning rates are used, etc), using Lemma 1 in \cite{Singh2000}. We would also expect it to perform well in practice, when used with function approximators (such as neural networks). However, we leave the task of implementing and properly evaluating this approach as a topic for future work.

There are also several ways that this algorithm could be tweaked or improved. For example, the algorithm we have described is an action-value algorithm, but the same approach could of course be used to make an actor-critic algorithm instead.
We also suspect that there could be interesting modifications one could make to the exploration strategy of the algorithm.
If a standard Bandit algorithm (such as $\epsilon$-greedy) is used, then the agent will mostly take actions that are optimal under its current estimate of $Q^\Diamond$. In the ordinary case, this is good, because it leads the agent to spend more time in the parts of the MDP that are relevant for maximising the reward. However, in this case, there is a worry that it could lead the agent to neglect the parts of the (affordance-based) MDP that are relevant for learning more about $V_1^\star \dots V_k^\star$, which might slow down the learning.
Again, we leave such developments for future work, since our aim here only is to show that it is feasible to learn non-trivial modal tasks.

We also want to point out that the work by \citet{pctl_rl} could provide another starting point for learning modal tasks using RL. In their work, they present some RL-based methods for determining whether a specification in Probabilistic Computational Tree Logic (PCTL) holds in an MDP. PCTL can be used to specify many kinds of properties of states in MDPs which depend on the transition function, including e.g.\ what states can and cannot be reached from a particular state, and with what probability, etc. We can therefore specify non-trivial modal tasks by providing a number of PCTL formulas, and allowing the reward function to depend on the truth values of these formulas. That is, we could consider a setup that is analogous to that which we give in Definition~\ref{def:affordance_mdp}, but where the \enquote{affordances} are replaced by PCTL formulas. It should then be possible to learn tasks specified in this manner by using the techniques of \citet{pctl_rl} to learn the values of the PCTL formulas, and then using ordinary RL to train on the resulting reward function.

\end{document}